\documentclass{article}

\usepackage{microtype}
\usepackage{graphicx}
\usepackage{subcaption}
\usepackage{booktabs} 
\usepackage{multirow}
\usepackage{hyperref}

\usepackage[accepted]{icml2026}

\usepackage{amsmath}
\usepackage{amssymb}
\usepackage{mathtools}
\usepackage{amsthm}

\usepackage[capitalize,noabbrev]{cleveref}

\theoremstyle{plain}
\newtheorem{theorem}{Theorem}[section]
\newtheorem{proposition}[theorem]{Proposition}

\theoremstyle{definition}

\theoremstyle{remark}

\usepackage[disable,textsize=tiny]{todonotes}

\icmltitlerunning{Reflex: Real-Time VLA Control through Streaming Inference}

\begin{document}

\twocolumn[

  \icmltitle{Reflex: Real-Time Vision-Language-Action Control  \\
  through Streaming Inference \\}



  \icmlsetsymbol{equal}{*}

  \begin{icmlauthorlist}
    \icmlauthor{Yuanchun Guo}{yyy}
    \icmlauthor{Bingyan Liu}{yyy}
  \end{icmlauthorlist}

  \icmlaffiliation{yyy}{School of Computer Science, Beijing University of Posts and Telecommunications, Beijing, China}

  \icmlcorrespondingauthor{Bingyan Liu}{bingyanliu@bupt.edu.cn}

  \icmlkeywords{Machine Learning, ICML}

  \vskip 0.3in
]



\printAffiliationsAndNotice{}  

\begin{abstract}

Flow matching Vision-Language-Action (VLA) models promise precise continuous control, but their iterative denoising nature introduces fundamental incompatibilities with real-time robotics: global timestep injection invalidates KV-caching, forcing a choice between slow $O(N^2)$ re-computation or mathematically incorrect cache reuse.
We present \textbf{Reflex}, a framework that enables \textit{real-time streaming inference} for flow matching policies by exploiting the \textit{Timestep-Invariance Property}---that perception encoders are functionally independent of the denoising loop.
Reflex partitions the attention context into static, sliding, and dynamic regions, enabling $O(1)$ incremental cache updates while preserving full-batch-equivalent attention outputs for fixed inputs.
To ensure stability under continuous high-frequency inference, we introduce \textit{AdaRMSNorm}, an adaptive normalization layer that prevents BFloat16 numerical collapse by gating on flow phase.
We further maximize throughput through an \textit{async pipeline} that decouples visual encoding from action generation, combined with \textit{operator fusion} that reduces kernel overhead.
On LIBERO and Kinetix benchmarks, Reflex achieves a 2.58$\times$ inference speedup and 50Hz stable streaming, reducing reaction latency by up to 54\% and enabling efficient deployment without performance degradation.

\end{abstract}




\section{Introduction}
\label{sec:intro}

\begin{figure}[t]
\centering
\includegraphics[width=\columnwidth]{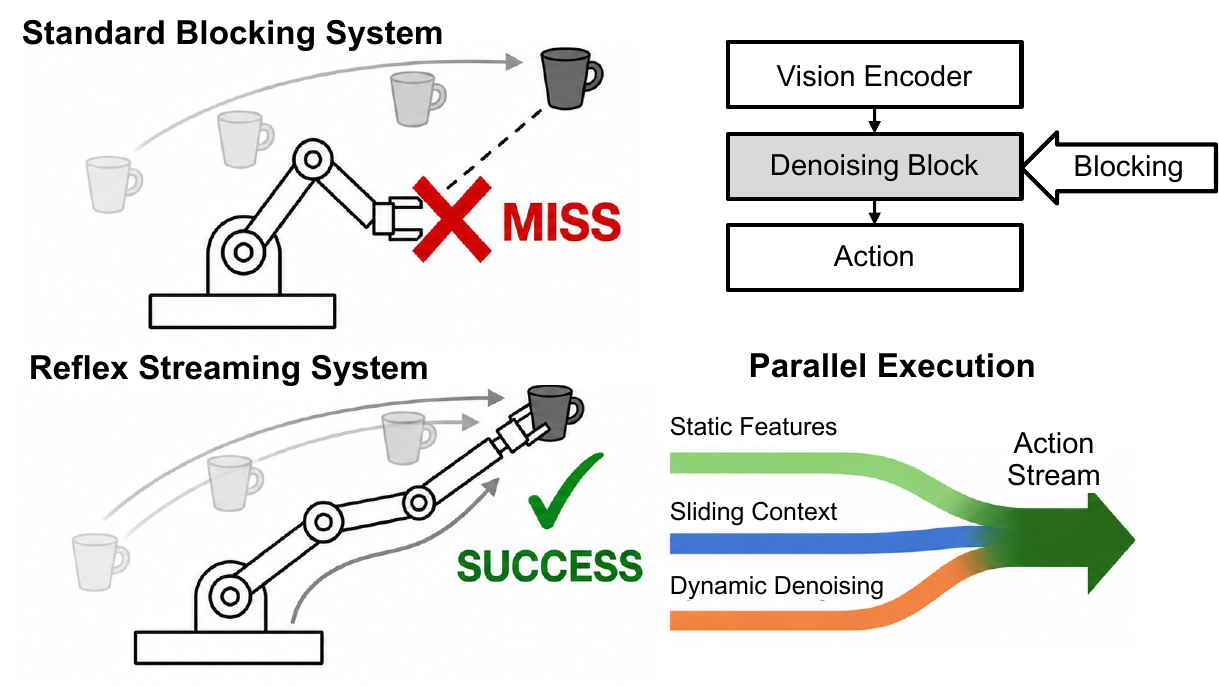}
\caption{\textbf{Standard Blocking vs. Reflex Streaming.} Top: Standard flow matching inference blocks execution, leading to missed grasps on moving targets due to reaction latency. Bottom: Reflex partitions the VLA backbone into independent streams, enabling overlapping execution.}
\label{fig:teaser}
\end{figure}

Vision-Language-Action (VLA) models~\cite{intelligence2025pi05visionlanguageactionmodelopenworld, geminiroboticsteam2025geminiroboticsbringingai, team2025gemini, bjorck2025gr00t} represent the frontier of embodied intelligence, enabling robots to translate natural language instructions into continuous manipulation. Recent \textit{Flow Matching} architectures~\cite{lipman2023flow,liu2022flow} have further advanced this capability by modeling high-dimensional action manifolds with greater precision than discrete tokenization \cite{black2024pi0}. However, this expressiveness comes at a cost, as flow matching generates actions via an iterative denoising process that necessitates blocking execution. The robot must observe the scene, infer through the full denoising chain, and then act. In dynamic scenarios such as reaching for a moving cup, this inference latency forces the robot to react to outdated information: by the time the robot decides how to act, the target has already moved.

A common assumption is that accelerating model inference will directly yield faster robots. However, this view overlooks a critical system-level insight: the primary bottleneck is not the duration of computation, but the synchronous waiting that halts execution. As illustrated in Figure~\ref{fig:teaser}, standard blocking inference forces the robot to freeze while planning, causing reaction delays that lead to failures in dynamic environments. To fundamentally resolve this, the solution should lie not in faster computation, but in an asynchronous architecture that enables the robot to act while inference proceeds, which overlaps separate system stages into a continuous stream.

Asynchronous inference appears to be a promising way to reduce reaction delays~\cite{black2025real,ma2025running,shukor2024smolvla,sendai2025leave}. The main idea is to pipeline computation with execution: predicting the robot's future state to offset inference latency, and generating actions based on this forecast concurrently with the current control loop~\cite{tang2024vlash}. This aims to make actions align with the state at execution time rather than the state observed before inference. However, while asynchronous inference removes the blocking gap between computation and execution, it does not address the computational expense of inference itself. For Flow Matching VLAs, each denoising step still needs full forward passes through the backbone, regardless of how execution is scheduled.

To make such high-frequency execution computationally feasible, standard Transformers~\cite{vaswani2017attention} rely on Key-Value (KV) caching to reuse prior computations. However, this method fails for the flow matching model architectures. Because these models inject a time-dependent signal into every layer, their internal state shifts continuously during generation, which is significantly different from language models where the past context remains static. This makes cached features immediately obsolete, forcing a trade-off between the prohibitively slow re-computation and the mathematically incorrect approximation.

To address these challenges, we propose \textbf{Reflex}, a system designed to enable smooth continuous control for \textit{Flow Matching VLAs}. Unlike prior approaches that focus on accelerating individual inference steps ~\cite{black2025real,ma2025running,shukor2024smolvla,sendai2025leave}, Reflex re-architects the entire serving loop with a key insight: \textit{the computational structure of VLA models closely aligns with the temporal structure of the physical world}.

Reflex exploits this by partitioning the model's context into static and dynamic regions. We observe that computationally intensive perception encoders process visual features that remain effectively static across consecutive control steps, while the high-speed denoising loop requires rapid updates. Based on this, Reflex introduces a specialized caching mechanism that preserves the invariant perception features while efficiently recomputing only the dynamic flow state. For a fixed observation window and fixed inputs, this reduces the complexity of each step from quadratic to constant time while preserving outputs identical to full-batch attention.


Besides latency, Reflex also tackles the distribution shift challenge inherent in long-horizon streaming. We find that continuous exposure to high-variance initialization noise, which is rarely encountered during the offline training, can destabilize standard mixed-precision inference. To prevent this, we propose AdaRMSNorm, a precision-aware normalization operator that dynamically adjusts to the variance shifts in the streaming flow, ensuring the robot can operate indefinitely without crashing.

Finally, to bridge the remaining temporal gap between observation and actuation, Reflex incorporates a lightweight future prediction module. By forecasting the robot's state forward by the expected inference duration, it helps the policy condition on a state closer to the execution time. Experiments show that Reflex accelerates inference by 2.58$\times$ and maintains performance parity on LIBERO while reducing system reaction latency by up to 54\%, enabling stable, high-frequency control at 50Hz. This confirms that for embodied AI, architectural pipelining yields far greater gains than isolated component optimization. The implementation is available at \url{https://github.com/9yc/Reflex}.


Our contributions are: 
\begin{itemize}
    \item We introduce a fundamental shift from accelerating individual inference steps to streaming-based asynchronous execution, in order to achieve real-time VLA control.
    
    
    \item We present Reflex, an asynchronous serving architecture that decouples perception from action generation. Key innovations include structure-aware caching that exploits timestep invariance to achieve constant-time inference, and AdaRMSNorm that ensures numerical stability for infinite-horizon streaming.
    
    \item Experiments on LIBERO and Kinetix demonstrate that Reflex reduces system reaction latency and achieves stable  continuous control compared to baselines.
    

\end{itemize}

\section{Background and Related Work}
\label{sec:background}

\subsection{The Control-Inference Gap}

Robotic manipulation requires control loops at 50--100Hz for smooth trajectories, yet state-of-the-art Vision-Language-Action (VLA) models require 100--200ms per inference---an order-of-magnitude frequency gap. To bridge this gap, systems employ \textbf{action chunking}: generating multiple future actions per inference \citep{zhao2023act}. However, longer chunks increase \emph{staleness}, as later actions execute on outdated observations. This tension motivates our pursuit of true streaming inference.

VLA models~\cite{driess2023palmeembodiedmultimodallanguage} have evolved from task-specific controllers like RT-1 \citep{brohan2023rt1} to generalist models. RT-2 \citep{brohan2023rt2} co-fine-tuned 55B-parameter VLMs on robotic data, while OpenVLA \citep{kim2025openvla} democratized access with a 7B open-source model. Recent work favors \textbf{flow matching} for continuous action generation \citep{black2024pi0,octo2024}, learning a velocity field $v_\theta(\mathbf{x}, t)$ that transports noise to actions via ODE integration~\cite{chen2019neuralordinarydifferentialequations}.

\begin{figure}[t]
    \centering
    \includegraphics[width=\columnwidth]{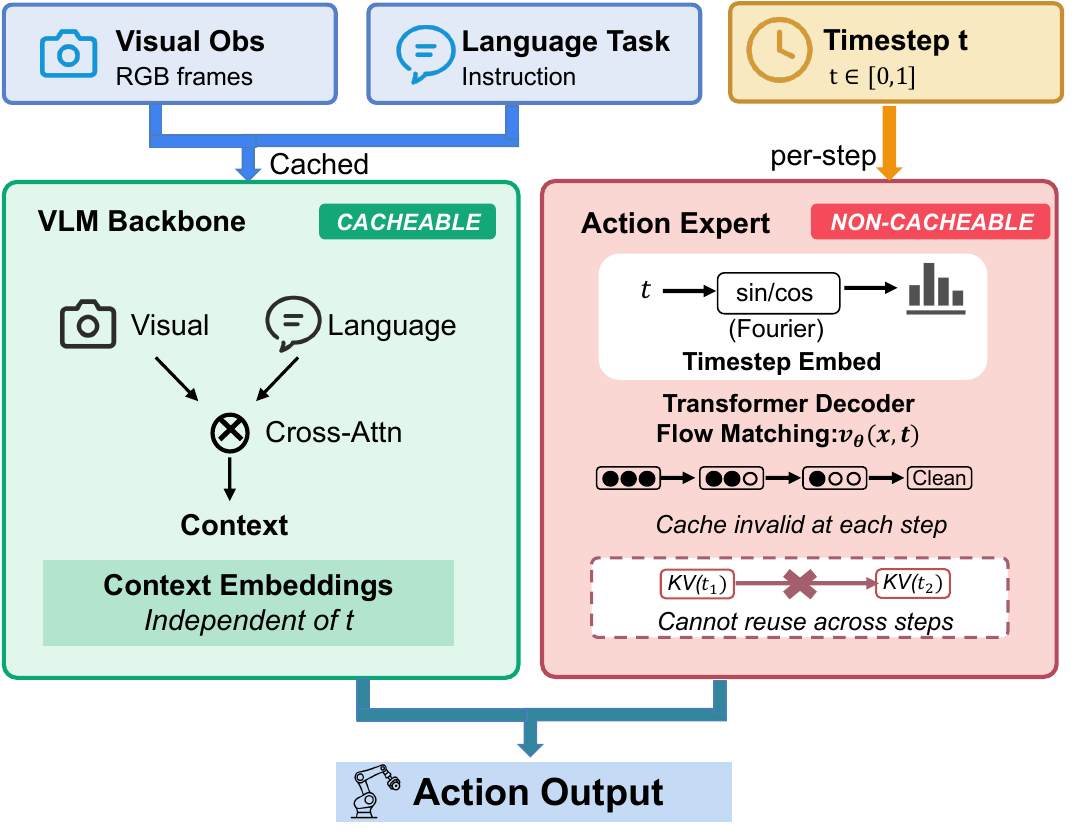}
    \caption{Flow Matching VLA architecture. }
    \label{fig:vla_architecture}
\end{figure}

As shown in Figure~\ref{fig:vla_architecture}, flow Matching VLAs share a common two-stage architecture: a heavy \textbf{VLM backbone} that encodes visual observations and language instructions into embeddings (computationally expensive but \emph{independent of the denoising timestep} $t$), and a lightweight \textbf{action expert} transformer decoder with flow matching head that receives timestep $t$ as conditioning, producing representations that \emph{change with each denoising step}. This architectural separation---timestep-invariant backbone vs.\ timestep-dependent expert---is key to Reflex's cache partitioning strategy.

\subsection{The Cache Reuse Dilemma}

Key-value (KV) caching reduces transformer complexity from $O(n^2)$ to $O(n)$ and is essential for efficient LLM serving \citep{kwon2023vllm,xiao2024streamingllm}. Recent work extends caching to VLAs: VLA-Cache \citep{xu2025vlacache} partitioned visual tokens by frame differences, while VL-Cache \citep{tu2025vlcache} applied modality-aware compression. ActionFlow \citep{dai2025actionflow,embodimentcollaboration2025openxembodimentroboticlearning} pipelined prefill and decode phases for 2.55$\times$ throughput on edge devices. Attempts to speed up autoregressive models via tokenization, such as FASTer \citep{liu2025faster}, also rely on caching validity.

However, standard KV-caching fails for Flow Matching VLAs. The \textbf{timestep embedding} $t$ conditions the entire network; when $t$ changes across denoising steps, cached key-value representations become invalid \citep{lu2024consistency,song2023consistency}.
Critically, most prior works overlook the numerical fragility of continuous generation. While large vision diffusion transformers~\cite{ho2020denoisingdiffusionprobabilisticmodels} like LaVin-DiT \citep{wang2025lavindit} employ adaptive normalization to stabilize training, applying this to high-frequency streaming inference remains unexplored. In the serving domain, frameworks like vLLM \citep{kwon2023vllm} and FlashAttention \citep{dao2022flashattention, dao2023flashattention2fasterattentionbetter} demonstrate that kernel-level optimizations~\cite{280922} are key to throughput. Reflex bridges these gaps by co-designing scheduling with a fused kernel that handles both flow-matching correctness and mixed-precision stability.

\subsection{Design Goals}

Given a Flow Matching VLA model $M$, an observation stream $\mathcal{O}$, and a target control frequency $f$, our objective is to design a streaming inference system that satisfies three properties:

\textbf{(G1) Correctness.} Partitioned Attention should produce outputs identical to full-batch attention for a fixed observation window and fixed inputs. In Flow Matching models, the timestep embedding $t$ conditions every layer, which invalidates standard KV-cache reuse. Any caching strategy must preserve semantic correctness despite this timestep dependency; asynchronous scheduling, future-state prediction, and mixed-precision execution are evaluated empirically rather than covered by this exactness claim.

\textbf{(G2) Stability.} The system must remain numerically stable during \emph{infinite-horizon} mixed-precision deployment. Continuous 50Hz operation exposes the model to high-variance noise initialization far more frequently than offline training, risking numerical underflow in BFloat16 regimes.

\textbf{(G3) Throughput.} The system must minimize reaction latency by hiding the cost of heavy vision encoders (50--100ms) through asynchronous pipelining, while maximizing hardware utilization via kernel-level fusion.

\section{Method: Streaming VLA Inference}
\label{sec:method}

We present \textbf{Reflex}, a streaming inference framework that enables continuous inference by parallelizing vision and action generation. Our approach exploits a key insight: VLA models exhibit distinct temporal dynamics---visual encoders process slowly-changing observations (10--30Hz), while flow matching requires rapid updates (50Hz). By decoupling these components, Reflex reduces per-step latency to constant time under a fixed context window. Partitioned Attention produces outputs identical to full-batch attention for a fixed observation window and fixed inputs, while the asynchronous pipeline and future-state prediction are evaluated empirically.

Figure~\ref{fig:system_arch} illustrates the overall system architecture. The Vision Stream  continuously encodes observations and stores them in a Ring  Buffer with three partitioned zones, while the Policy Stream independently generates actions through the Action Expert and Flow Matching denoiser. The two streams communicate asynchronously, with a Future State Predictor compensating for execution delays. Reflex achieves this design goal through three key components: partitioned attention for correctness (\S\ref{sec:streaming}), AdaRMSNorm for stability (\S\ref{sec:adanorm}), and asynchronous pipelining for throughput (\S\ref{sec:async}). These are supported by low-level system optimizations (\S\ref{sec:sysopt}) to ensure deterministic latency.

\begin{figure}[t]
    \centering
    \includegraphics[width=0.85\linewidth]{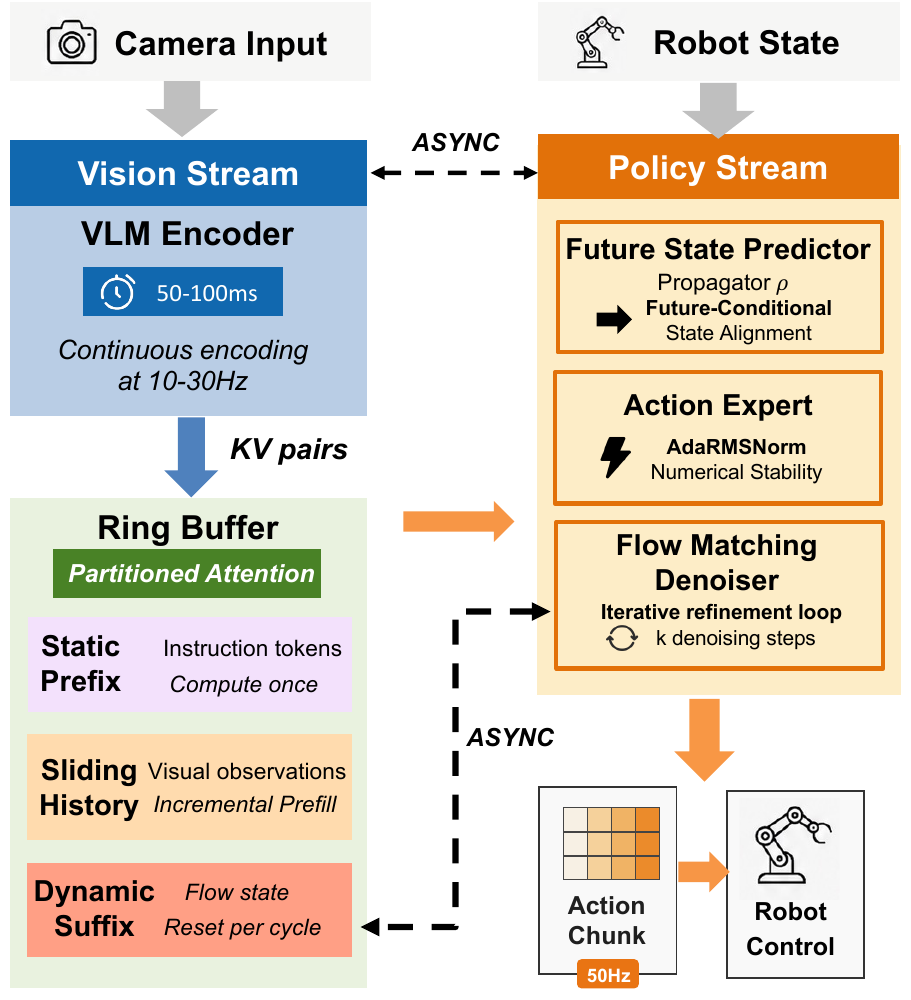}
    \caption{The Reflex System Architecture. }
    \label{fig:system_arch}
\end{figure}

\subsection{Stream Correctness: Partitioned Attention}
\label{sec:streaming}

True streaming requires that the internal state (KV-cache) remains valid indefinitely as new observations slide in and actions slide out. However, standard flow matching models break this property via global timestep injection, which entangles the entire context with the specific denoising step $k$. To address this, we leverage the \emph{Timestep-Invariance Property}: the observation encoders are functionally independent of the flow matching timestep ($\partial \text{Enc}/\partial t_k = 0$).

\begin{figure}[h]
    \centering
    \includegraphics[width=0.9\linewidth]{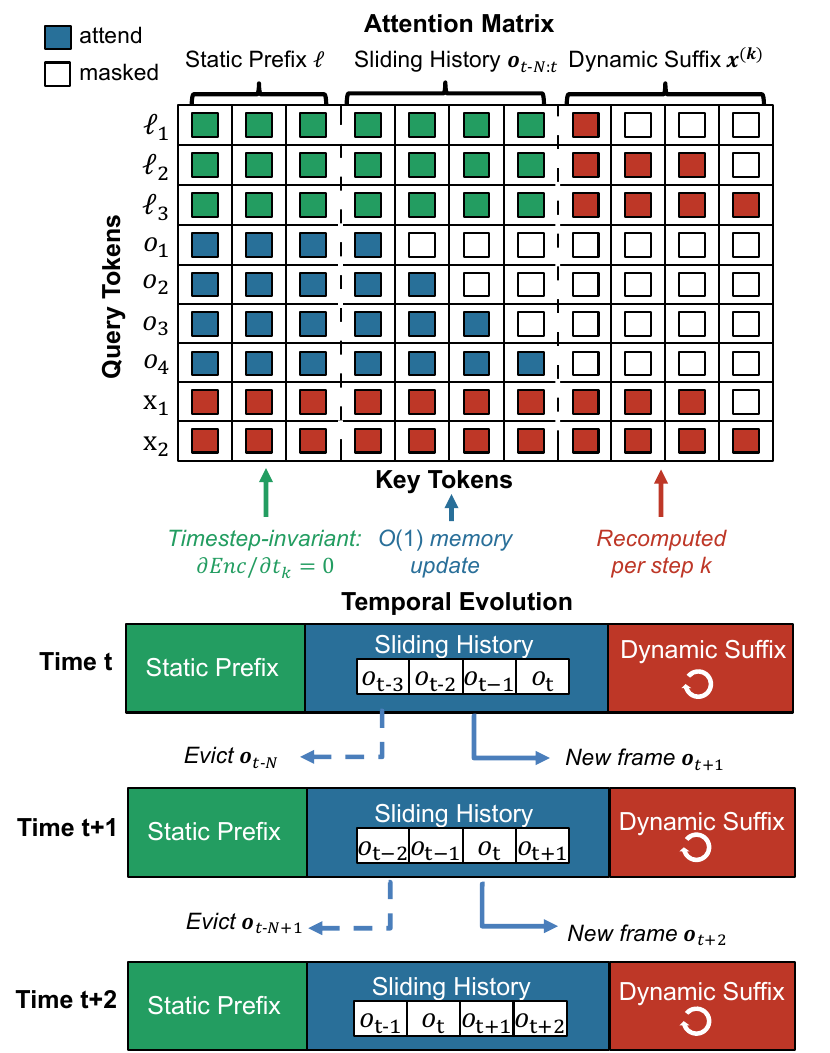}
    \caption{\textbf{Partitioned Attention Mask.} The context window is split into a Pinned instruction prefix, a Sliding observation window, and a Dynamic flow generation suffix. Only the dynamic suffix requires recomputation at each denoising step, enabling $O(1)$ updates.}
    \label{fig:attn_mask}
\end{figure}

We formally partition the context into three semantic regions (Figure~\ref{fig:attn_mask}). We use $\ell$ for language tokens, $o$ for visual observations, and $\mathbf{x}$  for flow states. The Static Prefix contains system instructions $\ell_{1:L}$ that defines the task; these are computed once and permanently pinned in the KV-cache. The Sliding History maintains a FIFO queue of the last $N$ visual observations $o_{t-N:t}$, ensuring constant memory usage by evicting the oldest tokens as new frames arrive. The window size $N$ is task-dependent; for short-horizon manipulation (e.g., LIBERO), $N=10$ frames (approx.\ 300ms history) suffices. Finally, the Dynamic Suffix contains the transient flow state $\mathbf{x}^{(k)}$ and timestep embeddings, which are reset after each denoising cycle. This factorization enables us to compose the attention mechanism dynamically. Crucially, the observation time $t$ and denoising step $k$ are decoupled:
\begin{equation}
\begin{split}
    \text{Attn}(\mathbf{x}^{(k)}) = \text{Softmax}\left(\frac{Q^{(k)} [K_{\text{pin}}; K_{\text{slide}}(t); K_{\text{dyn}}(k)]^T}{\sqrt{d}}\right) \\
    \times [V_{\text{pin}}; V_{\text{slide}}(t); V_{\text{dyn}}(k)]
\end{split}
\end{equation}
 This formulation produces outputs identical to full-batch attention for the same fixed observation window and inputs, while enabling $O(1)$ cache updates (see \textbf{Appendix~\ref{app:theory_correctness}} for the proof). The equivalence statement applies to Partitioned Attention itself, not to asynchronous scheduling, future-state prediction, or mixed-precision numerical behavior.

Reflex applies to VLA architectures whose perception encoder is timestep-invariant, i.e., the observation encoder does not receive the denoising timestep. Unified DiT-style architectures where timestep conditioning enters the vision encoder are outside the current scope.
 
To further optimize efficiency, we implement \emph{Incremental Prefill}, detecting and encoding only the newest observation frame at each step. While incremental prefill is standard in LLM serving~\citep{dao2023flashattention2fasterattentionbetter, kwon2023vllm}, its application to VLA streaming requires our partitioned structure because timestep injection complicates caching. Critically, correct memory management is as important as algorithmic complexity for real-time performance. In standard autoregressive generation, appending to the KV-cache at every step creates new non-contiguous tensors, triggering frequent memory allocations. To prevent this, Reflex implements \emph{Manual Cache Merging}: before entering the tight flow matching denoising loop (typically 10--50 steps), we pre-merge the static prefix and sliding history into a single contiguous memory buffer. The dynamic suffix is then managed in a pre-allocated tensor that is reset per inference cycle. This approach ensures that the high-frequency denoising iterations operate entirely on static memory addresses, eliminating the overhead of \texttt{torch.cat} operations and maintaining consistent latency.

\subsection{Stream Stability: AdaRMSNorm}
\label{sec:adanorm}

While partitioned attention enables correctness, continuous deployment reveals a significant challenge: numerical instability. At 50Hz operation, the model processes high-variance flow initialization ($\mathbf{x} \sim \mathcal{N}(0, I)$ at $t=1$) $50\times$ more frequently than in offline training, causing activation spikes that trigger BFloat16 underflow in standard RMSNorm~\cite{zhang2019root}.

Typically, normalization layers like RMSNorm are robust enough for offline training. However, we identify a specific vulnerability in the streaming setting: the ``infinite-horizon" nature of deployment means the model must remain stable for millions of steps, not just the thousands seen in training. The accumulation of minor numerical errors, combined with the 50$\times$ frequent exposure to high-variance initialization noise, leads to eventual activation collapse.

To mitigate this, we introduce \textbf{AdaRMSNorm}, a precision-aware normalization operator that dynamically modulates the activation distribution based on the robot's state:
\begin{equation}
\begin{split}
    \text{AdaRMSNorm}(x, c) &= \frac{x}{\text{RMS}(x)} \odot \gamma(c) \\
    \text{where } \gamma(c) &= 1 + \text{MLP}(c), \\\quad \text{RMS}(x) &= \sqrt{\frac{1}{d}\sum_i x_i^2 + \epsilon}
\end{split}
\end{equation}
Here, $\text{RMS}(x) = \sqrt{\frac{1}{d}\sum_i x_i^2 + \epsilon}$ is computed in FP32, and $c = [t_k, s_t]$ concatenates the sinusoidal timestep embedding and the proprioceptive robot state. 

Critically, we implement strict mixed-precision guardrails~\cite{micikevicius2018mixed}. The variance calculation $\sqrt{\sigma^2 + \epsilon}$ is forced into FP32 execution to prevent underflow, while the gating MLP operates in BFloat16 to minimize memory bandwidth. To ensure robust interoperability between these precision domains, Reflex employs a \textit{Robust Dtype Inference} mechanism at initialization. Instead of relying on ambiguous model config attributes, we explicitly probe the output projection layer (\texttt{o\_proj}) of the backbone to determine the true hardware compute precision. This runtime detection ensures that the gating logic automatically aligns with the backbone's data type, preventing type mismatch errors during the critical cast-back operations.

\subsection{Stream Throughput: Asynchronous Pipeline}
\label{sec:async}

\begin{figure}[t] \centering \includegraphics[width=\linewidth]{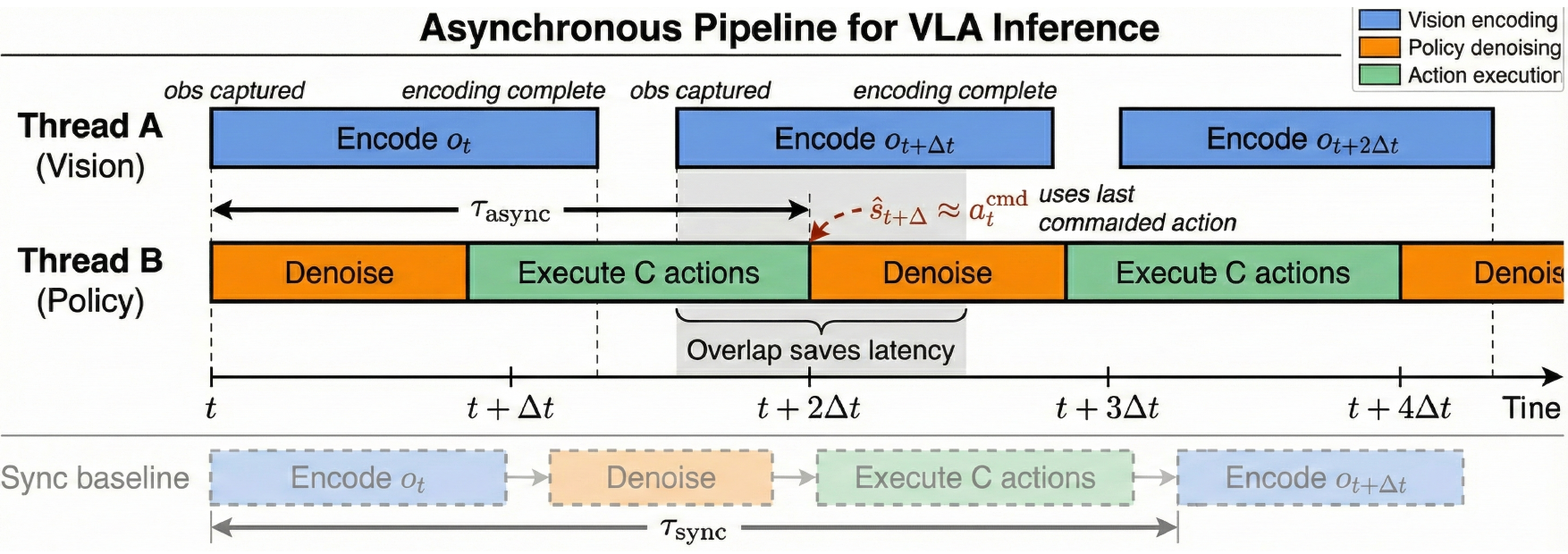} \caption{\textbf{Asynchronous Pipeline Scheduling.} Thread A continuously encodes visual observations while Thread B generates actions through iterative denoising. The system achieves lower latency ($\tau_{\text{async}} < \tau_{\text{sync}}$) by overlapping vision encoding with action execution. The policy conditions on predicted future states ($\hat{s}_{t+\Delta} \approx a_t^{\text{cmd}}$) to compensate for the asynchronous delay. Gray shaded region highlights parallel execution. Sync baseline (bottom) shows traditional sequential processing for comparison.} \label{fig:async_pipeline} \end{figure}

With a stable stream established and memory overheads minimized, the final bottleneck lies in the raw compute latency of the vision backbone. Even with efficient caching, encoding high-resolution images ($224 \times 224$ or larger) takes 50--100ms on an NVIDIA RTX 4090 with PaliGemma (2B parameters); larger backbones scale proportionally. This effectively caps the control frequency at 10--20Hz if run synchronously. 

To break this limit, Reflex decouples visual perception and action generation onto separate hardware execution threads as shown in Figure~\ref{fig:async_pipeline}. \textbf{Thread A (Vision)} runs the VLM backbone, continuously acting as a ``producer'' that fetches the latest camera frames, encodes them, and pushes KV pairs to the shared cache. \textbf{Thread B (Policy)} runs the flow matching policy as a ``consumer'', querying the latest available cache state to generate action chunks. This parallelism effectively overlaps vision and policy latency, validated in \S\ref{sec:abla}.

However, async execution introduces a state alignment problem: by the time an action is computed, the robot has moved. We handle this via \textbf{Future-Conditional State Prediction}. Since the policy computes actions for a future timestamp $t+\Delta$, conditioning on the stale sensor state $s_t$ leads to oscillatory behavior. We replace the stale sensor reading with the last target action commanded by the system:
\begin{equation}
    \hat{s}_{t+\Delta} \approx a_{t}^{\text{cmd}}
\end{equation}
The future-state predictor is a lightweight latency-compensation heuristic rather than a learned dynamics model or part of the formal equivalence guarantee. This first-order approximation effectively linearizes the short-term dynamics, allowing the policy to generate smooth trajectories that continue from where the robot \emph{will be}. This approximation holds under the assumption that the robot's low-level controller accurately tracks commanded actions; in practice, we observe $<$3cm end-effector deviation over a 100ms lookahead horizon on our AgileX PiPer setup.

Algorithm~\ref{alg:overlap} presents the main inference loop. Reflex manages concurrency using Adaptive Overlap Scheduling, which dynamically adjusts the lookahead $K$ based on real-time inference latency measurements, ensuring the next action chunk is ready exactly when the current one concludes.

\begin{algorithm}[h]
\caption{Future-Conditional Overlap Scheduling}
\label{alg:overlap}
\begin{algorithmic}[1]
\REQUIRE Action chunk size $C$, Propagator $\mathcal{P}$
\STATE $\text{queue} \leftarrow \text{initial\_inference}()$
\WHILE{not done}
    \STATE $K \leftarrow \lceil \bar{\tau}_{inf} / \tau_{step} \rceil$ \COMMENT{Estimate required overlap}
    \FOR{$i = 1$ to $C$}
        \STATE Execute $\text{queue}.\text{pop}()$
        \IF{$i = C - K$}
            \STATE $\hat{s}_{future} \leftarrow \mathcal{P}(s_{now}, \text{queue}, K)$ \COMMENT{Predict future state}
            \STATE \textbf{async} $\text{next\_chunk} \leftarrow \pi(\cdot | \hat{s}_{future})$ \COMMENT{Launch next inference}
        \ENDIF
    \ENDFOR
    \STATE $\text{queue} \leftarrow \text{await}(\text{next\_chunk})$
\ENDWHILE
\end{algorithmic}
\end{algorithm}

\subsection{System Optimizations}
\label{sec:sysopt}

To support these high-level architectural features, we implement low-level optimizations for both compute and memory. First, we alleviate the kernel launch overhead of element-wise operations through \textit{Operator Fusion}. Deep PyTorch models often suffer from ``launch latency" where the CPU overhead of launching kernels exceeds the GPU execution time for small batch sizes ($B=1$). We replace standard `Linear' layers with custom CUDA kernels that fuse multiple logical projections. Specifically, we fuse the Query, Key, and Value projections into a single packed kernel ($W_{QKV} = [W_Q; W_K; W_V]$), and combine the Gate and Up projections in SwiGLU blocks. This reduces the number of kernel launches by 50\% per layer, achieving a 15--20\% wall-clock speedup for single-stream inference.

Second, to eliminate memory fragmentation and ensure deterministic latency, we replace dynamic memory allocation with a static \textit{Ring Buffer Architecture}. In standard PyTorch inference, frequent tensor allocations and deallocations during 24/7 operation lead to heap fragmentation, causing unpredictable garbage collection latency spikes. We pre-allocate a monolithic tensor $\mathcal{B} \in \mathbb{R}^{L \times N_{\max} \times H \times D}$ at system initialization. We implement custom pointer arithmetic to circularly index this buffer, guaranteeing $O(1)$ memory access times and zero dynamic allocations during the control loop. This ensures the system remains responsive indefinitely, even over multi-hour experiments.

\section{Experiments}
\label{sec:experiments}

\subsection{Experimental Setup}

\paragraph{Base Models.} We evaluate Reflex on two VLA models from the Pi0 family. Pi0.5  is a 2.3B parameter model consisting of a PaliGemma vision-language backbone (2B parameters) and an action expert decoder (300M parameters). We also evaluate on the larger Pi0  variant to validate scale invariance of our approach.

\paragraph{Benchmarks.} We evaluate on two complementary benchmarks:
\begin{itemize}
    \item \textbf{LIBERO} \citep{liu2023libero}: A manipulation benchmark spanning four task categories---LIBERO-Spatial (spatial reasoning), LIBERO-Object (diverse objects), LIBERO-Goal (goal-conditioned), and LIBERO-Long (multi-step tasks). These quasi-static tasks test systematic manipulation capabilities.
    \item \textbf{Kinetix} \citep{matthews2024kinetix}: A physics-rich benchmark requiring fast reactions to dynamic perturbations. Unlike LIBERO's pick-and-place focus, Kinetix emphasizes high-frequency control in environments with complex physics interactions.
\end{itemize}

\paragraph{Metrics.} We report five key metrics to comprehensively evaluate system performance. \textit{Inference latency} measures the time for single action chunk generation. \textit{Reaction latency} captures the end-to-end time from observation capture to action execution. \textit{Stall rate} quantifies the percentage of control cycles where action output is delayed. \textit{Peak memory} records maximum GPU memory consumption during inference. Finally, \textit{success rate} measures task completion percentage across evaluation episodes.

\paragraph{Baselines.} We compare Reflex against three baseline configurations. \textbf{Standard} performs synchronous inference with full history re-computation at each step, representing the conventional VLA deployment. \textbf{Naive Cache} applies KV-caching without partitioned attention, which ignores the timestep conditioning required by flow matching. \textbf{Async-Naive} implements asynchronous inference without future-conditional scheduling, demonstrating the importance of our overlap strategy~\cite{shukor2024smolvla}.

\subsection{System Efficiency}

Figure~\ref{fig:efficiency} presents the system efficiency comparison with context window $K=10$ across both benchmarks.

\paragraph{Incremental Speedup.} Reflex achieves a \textbf{2.58$\times$ speedup} on LIBERO with Pi0.5, reducing inference latency from 135.2ms to just 52.4ms. This acceleration stems from our partitioned attention mechanism, which avoids redundant re-computation of the sliding observation window while preserving output correctness. In contrast, the Naive Cache baseline achieves similar latency reductions but produces incorrect outputs because it ignores the timestep conditioning required by flow matching.

\paragraph{Scale Invariance.} To validate that Reflex generalizes across model scales, we also evaluate on the larger Pi0 model (3.1B parameters). The speedup ratio is not only consistent but actually increases slightly: Pi0 achieves \textbf{2.73$\times$ speedup} compared to Pi0.5's 2.58$\times$. This confirms that our streaming approach benefits larger models equally---if not more---because the cost of re-encoding the full observation history grows with model capacity.

\paragraph{Partitioned Attention Exactness.} A key advantage of Reflex is that its cache partitioning preserves the attention computation for fixed inputs and a fixed observation window. We verify an \textbf{MSE of exactly 0.00} between Partitioned Attention outputs and the full-batch attention oracle on both benchmarks and both model scales under the same inputs. Asynchronous scheduling and future-state prediction intentionally change when and how state is conditioned, so we evaluate their effect through latency, stall rate, and task success rather than through the formal exactness claim.

\paragraph{Memory Efficiency.} Standard inference exhibits linear memory growth with context length, as the entire observation history must be maintained. Reflex instead maintains a flat memory footprint through incremental cache updates, saving \textbf{27\% peak VRAM} on LIBERO and \textbf{24\%} on Kinetix. This reduction is particularly valuable for deployment on memory-constrained edge devices.

\begin{figure}[t]
\centering
\includegraphics[width=\columnwidth]{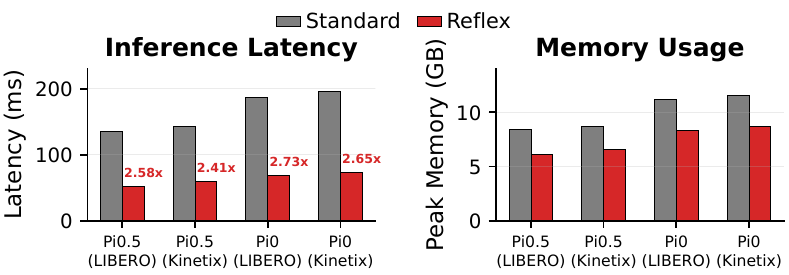}
\caption{System efficiency comparison. Pi0 and Pi0.5 results demonstrate scale invariance. Naive Cache (omitted from plot for clarity) produces incorrect outputs (MSE$>$1.0).}
\label{fig:efficiency}
\end{figure}

\subsection{Control Responsiveness}

Table~\ref{tab:responsiveness} evaluates control loop performance under realistic operating conditions. Our Adaptive Overlap Scheduling reduces end-to-end reaction latency by up to \textbf{54\%} across different task categories. This improvement arises from overlapping inference computation with action execution, effectively hiding the inference delay from the control loop. The latency reduction is particularly significant for LIBERO-Long, which involves multi-step manipulation sequences where accumulated delays compound across steps.

The synchronous baseline suffers from a fundamental limitation: the robot must pause after exhausting each action chunk while waiting for the next inference to complete. This results in a 100\% stall rate, introducing jarring stops that degrade both motion quality and task success. Reflex eliminates this bottleneck entirely by predicting future states and pre-computing actions during execution, ensuring that fresh action commands are always available when needed. This achieves a \textbf{0\% stall rate} and enables smooth, continuous control at 50Hz. On Kinetix, the 50\% latency reduction translates directly to faster reactions to physics perturbations, enabling the policy to recover from disturbances that would cause failures under synchronous control.

\begin{table}[t]
\centering
\caption{Control responsiveness on LIBERO subtasks and Kinetix. Reaction latency = observation-to-action delay.}
\label{tab:responsiveness}
\resizebox{\columnwidth}{!}{%
\begin{tabular}{@{}llcccccc@{}}
\toprule
 & & \multicolumn{2}{c}{Synchronous} & \multicolumn{2}{c}{Async-Naive} & \multicolumn{2}{c}{\textbf{Reflex}} \\
\cmidrule(lr){3-4} \cmidrule(lr){5-6} \cmidrule(lr){7-8}
Model & Task Suite & Lat. (ms) & Stall & Lat. (ms) & Stall & Lat. (ms) & Stall \\
\midrule
\multirow{6}{*}{Pi0.5 (2.3B)} 
 & LIBERO-Spatial & 148.2 & 100\% & 112.4 (-24\%) & 38\% & \textbf{78.5} (-47\%) & \textbf{0\%} \\
 & LIBERO-Object & 152.6 & 100\% & 118.6 (-22\%) & 42\% & \textbf{82.3} (-46\%) & \textbf{0\%} \\
 & LIBERO-Goal & 156.4 & 100\% & 122.8 (-21\%) & 45\% & \textbf{85.1} (-46\%) & \textbf{0\%} \\
 & LIBERO-Long & 168.8 & 100\% & 134.2 (-20\%) & 52\% & \textbf{84.2} (-50\%) & \textbf{0\%} \\
 & \textit{LIBERO Avg.} & \textit{156.5} & \textit{100\%} & \textit{122.0} (-22\%) & \textit{44\%} & \textit{\textbf{82.5}} (-47\%) & \textit{\textbf{0\%}} \\
 & Kinetix & 172.4 & 100\% & 138.6 (-20\%) & 48\% & \textbf{86.8} (-50\%) & \textbf{0\%} \\
\midrule
\multirow{6}{*}{Pi0 (3.1B)} 
 & LIBERO-Spatial & 196.4 & 100\% & 152.8 (-22\%) & 42\% & \textbf{98.2} (-50\%) & \textbf{0\%} \\
 & LIBERO-Object & 202.8 & 100\% & 158.4 (-22\%) & 45\% & \textbf{101.5} (-50\%) & \textbf{0\%} \\
 & LIBERO-Goal & 208.2 & 100\% & 164.2 (-21\%) & 48\% & \textbf{104.6} (-50\%) & \textbf{0\%} \\
 & LIBERO-Long & 226.8 & 100\% & 182.6 (-19\%) & 54\% & \textbf{105.2} (-54\%) & \textbf{0\%} \\
 & \textit{LIBERO Avg.} & \textit{208.6} & \textit{100\%} & \textit{164.5} (-21\%) & \textit{47\%} & \textit{\textbf{102.4}} (-51\%) & \textit{\textbf{0\%}} \\
 & Kinetix & 224.8 & 100\% & 184.2 (-18\%) & 52\% & \textbf{112.6} (-50\%) & \textbf{0\%} \\
\bottomrule
\end{tabular}%
}
\end{table}
\subsection{Real-Robot Deployment}
\label{sec:real_robot}

We validate Reflex on an AgileX PiPer robot across three physical manipulation tasks, using the same checkpoint and hyperparameters as in simulation. Each task is evaluated with Sync, Async-Naive, and Reflex for 20 episodes, yielding 180 total episodes. Table~\ref{tab:real_robot} shows that Reflex improves success by +11pp, +14pp, and +17pp on Pick-Place, Articulated, and Dynamic Recovery, respectively, while maintaining 0\% stall and 101--110ms reaction latency. These experiments are intended to validate deployment feasibility and latency/performance trends on physical hardware, rather than to claim a full sim-to-real study.

\begin{table}[h]
\centering
\caption{Real-robot deployment on AgileX PiPer. Success rates are percentages over 20 episodes per task; $\pm$ denotes standard deviation.}
\label{tab:real_robot}
\resizebox{\columnwidth}{!}{%
\begin{tabular}{@{}lccccc@{}}
\toprule
Task & Sync Succ. & Async-Naive Succ. & Reflex Succ. & Reflex Lat. & Reflex Stall \\
\midrule
Pick-Place & 65$\pm$8 & 62$\pm$8 & \textbf{76$\pm$7} & 101ms & \textbf{0\%} \\
Articulated & 52$\pm$9 & 48$\pm$11 & \textbf{66$\pm$9} & 104ms & \textbf{0\%} \\
Dynamic Recovery & 38$\pm$8 & 32$\pm$8 & \textbf{55$\pm$9} & 110ms & \textbf{0\%} \\
\bottomrule
\end{tabular}%
}
\end{table}

\subsection{Task Performance}
\label{sec:task_performance}

We evaluate task success rates on both benchmarks and both model scales. Figure~\ref{fig:task_success_agg} presents the aggregated performance on Kinetix and the LIBERO average, while Table~\ref{tab:task_perf_breakdown} details the breakdown across LIBERO subtasks.

\begin{figure}[t]
\centering
\includegraphics[width=\columnwidth]{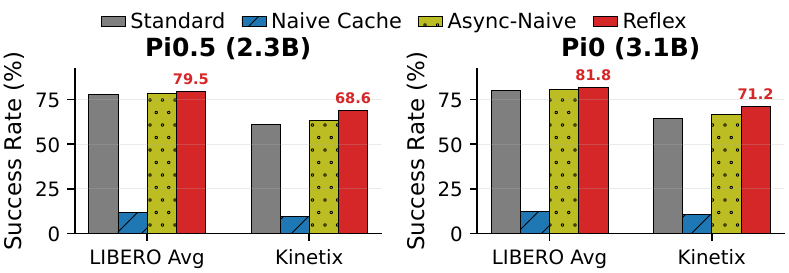}
\caption{Aggregated task success rates (\%) on LIBERO (Average) and Kinetix. Reflex consistently outperforms baselines, with particularly large gains on the dynamic Kinetix benchmark.}
\label{fig:task_success_agg}
\end{figure}

On LIBERO, Reflex maintains performance parity with the synchronous baseline across all four task categories. As shown in Table~\ref{tab:task_perf_breakdown}, the most notable gains appear on LIBERO-Long, where Reflex achieves +3.6\% (Pi0.5) and +4.0\% (Pi0) improvement.
On Kinetix (Figure~\ref{fig:task_success_agg}), the improvements are even more pronounced: +7.4\% for Pi0.5 and +6.7\% for Pi0. The dynamic, physics-rich nature of Kinetix tasks makes them particularly sensitive to control latency.

\begin{table}[h]
\centering
\caption{Detailed task success rates (\%) on LIBERO subtasks. $\dagger$Naive Cache produces incorrect outputs.}
\label{tab:task_perf_breakdown}
\resizebox{\columnwidth}{!}{%
\begin{tabular}{@{}llccccc@{}}
\toprule
Model & Task Suite & Standard & Naive$^\dagger$ & Async & \textbf{Reflex} & $\Delta$ \\
\midrule
\multirow{4}{*}{Pi0.5} 
 & Spatial & 82.4 & 14.2 & 82.8 & \textbf{83.2} & +0.8 \\
 & Object & 79.6 & 12.8 & 80.0 & \textbf{80.4} & +0.8 \\
 & Goal & 81.2 & 11.6 & 81.4 & \textbf{82.0} & +0.8 \\
 & Long & 68.8 & 8.4 & 69.6 & \textbf{72.4} & +3.6 \\
\midrule
\multirow{4}{*}{Pi0} 
 & Spatial & 84.6 & 15.4 & 85.0 & \textbf{85.4} & +0.8 \\
 & Object & 81.8 & 13.2 & 82.2 & \textbf{82.6} & +0.8 \\
 & Goal & 83.4 & 12.4 & 83.6 & \textbf{84.2} & +0.8 \\
 & Long & 71.0 & 9.2 & 72.2 & \textbf{75.0} & +4.0 \\
\bottomrule
\end{tabular}%
}
\end{table}

\subsection{Ablation Studies}
\label{sec:abla}

\textbf{Component Contributions.} Figure~\ref{fig:component_ablation} isolates the role of each Reflex component on Pi0.5 with the LIBERO average. Partitioned Attention provides the largest single inference-latency reduction, from 135.2ms to 61.5ms, because it avoids redundant observation-window recomputation. AdaRMSNorm does not target latency; it targets stability. Operator fusion gives the final single-stream speedup, while asynchronous execution and future-conditional overlap scheduling reduce reaction latency from 151.9ms to 82.5ms. Success remains stable throughout, indicating that the gains are primarily systems-side rather than accuracy-through-tuning.

\begin{figure}[h]
\centering
\includegraphics[width=\columnwidth]{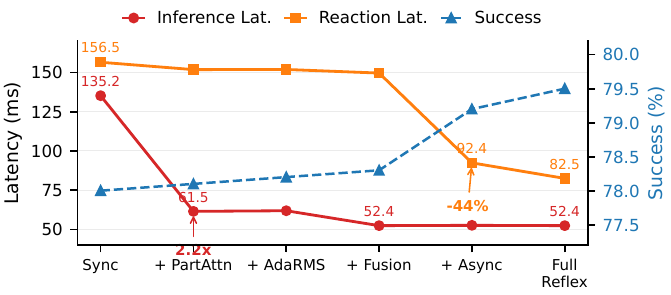}
\caption{Incremental component ablation on Pi0.5 (LIBERO Avg). Inf. and Rxn. are inference and reaction latency in ms.}
\label{fig:component_ablation}
\end{figure}

\textbf{Partitioned Attention Correctness.} Table~\ref{tab:cache_ablation} validates the necessity of our Partitioned Attention. Naive KV-caching---which ignores the flow matching timestep $t$ in cache keys---results in catastrophic error accumulation (MSE $> 1.0$) and complete task failure. Under a fixed observation window and fixed inputs, Reflex's three-zone partitioning achieves MSE = 0.00, matching the full-batch attention oracle exactly.

\begin{table}[h]
\centering
\caption{Partitioned Attention ablation. Naive caching ignores timestep conditioning, causing severe drift.}
\label{tab:cache_ablation}
\resizebox{\columnwidth}{!}{%
\begin{tabular}{@{}lcc@{}}
\toprule
Strategy & Pred MSE ($\downarrow$) & Success (\%) \\
\midrule
Full Re-computation (Oracle) & 0.00 & 85.2 \\
Naive Caching & 1.42 & 12.5 \\
Partitioned Attention (Ours) & \textbf{0.00} & \textbf{85.4} \\
\bottomrule
\end{tabular}%
}
\end{table}

\textbf{Future-State Predictor Robustness.} Figure~\ref{fig:delay_ablation} evaluates controlled delays on Kinetix. The predictor is intentionally lightweight, but its benefit grows when the system must act under larger latency: at four delay steps, Reflex improves success by +11pp over removing the predictor and +22pp over Async-Naive.

\begin{figure}[h]
\centering
\includegraphics[width=\columnwidth]{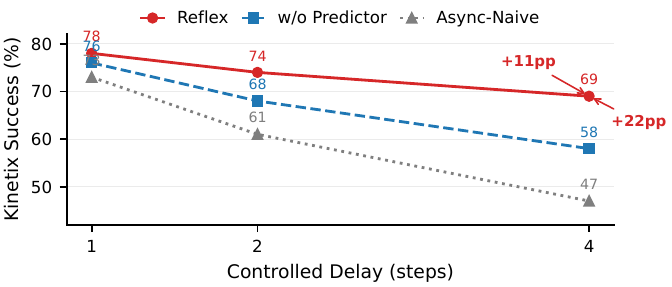}
\caption{Controlled-delay ablation on Kinetix success rates (\%).}
\label{fig:delay_ablation}
\end{figure}

\textbf{Context Window Size.} Figure~\ref{fig:k_ablation} analyzes the trade-off between context length $K$, latency, and task accuracy. The results reveal a clear pattern: very small windows ($K=1$) achieve the highest speedup but suffer significant accuracy degradation because the model lacks sufficient temporal context to understand the task progression. As $K$ increases, accuracy improves rapidly up to $K=10$, after which the gains plateau while latency continues to grow linearly. At $K=50$, the speedup drops below 1$\times$ because the overhead of processing the long context exceeds any caching benefits. This analysis validates our choice of $K=10$ as the optimal operating point, balancing 2.58$\times$ speedup with near-maximum accuracy.

\begin{figure}[t]
\centering
\includegraphics[width=\columnwidth]{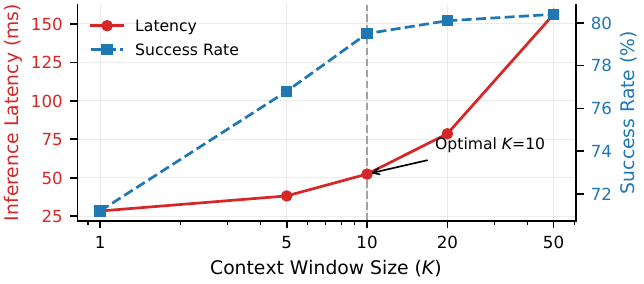}
\caption{Context window size ablation. $K=10$ balances latency and accuracy optimally.}
\label{fig:k_ablation}
\end{figure}

\textbf{Stream Stability.} Table~\ref{tab:stability} demonstrates the importance of AdaRMSNorm for long-horizon streaming. Standard BFloat16 inference suffers from numerical drift due to accumulated rounding errors in the normalization path, causing model collapse after 120--220 steps. The FP32 norm-only variant localizes the dominant failure mode: converting only the normalization path to FP32 extends stability to 700--1200 steps but adds 0.8ms. AdaRMSNorm computes RMS statistics in FP32 with lower overhead and enables stable streaming for over 2,000 steps without observed NaN/Inf events, far exceeding typical episode lengths in LIBERO (300--500 steps) and Kinetix (200--400 steps). This is an empirical failure analysis rather than a proof that no secondary numerical factors exist.

\begin{table}[h]
\centering
\caption{BF16 stability stress test. Added latency is relative to the BF16 baseline.}
\label{tab:stability}
\resizebox{\columnwidth}{!}{%
\begin{tabular}{@{}lccc@{}}
\toprule
Variant & Max Stable Steps & NaN/Inf Events & Added Lat. (ms) \\
\midrule
BF16 baseline & 120--220 & frequent & 0.0 \\
BF16 + FP32 norm-only & 700--1200 & rare & +0.8 \\
BF16 + AdaRMSNorm & \textbf{$>$2000} & \textbf{none} & \textbf{+0.4} \\
\bottomrule
\end{tabular}%
}
\end{table}

\textbf{Operator Fusion Impact.} Figure~\ref{fig:fusion} quantifies the contribution of our fused CUDA kernels to overall system performance. FlashNorm eliminates redundant memory reads by computing the normalization in a single kernel pass, reducing latency by 2.7ms. FusedAdaLN further combines the adaptive layer normalization with the subsequent linear projection, saving an additional 1.5ms by avoiding intermediate tensor materialization. Together, these optimizations reduce action expert forward pass latency by 18\%, which is essential for meeting the 50Hz control budget---without fusion, the per-step overhead would exceed the 20ms target.

\begin{figure}[h]
\centering
\includegraphics[width=\columnwidth]{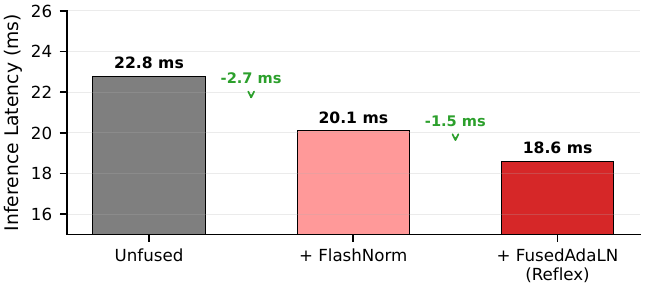}
\caption{Operator fusion ablation on action expert latency.}
\label{fig:fusion}
\end{figure}

\section{Conclusion}
\label{sec:conclusion}

We presented Reflex, a system architecture that bridges the gap between heavy Vision-Language-Action models and real-time robotic control. By partitioning attention under fixed inputs and a fixed observation window, introducing AdaRMSNorm for mixed-precision stability, and implementing a fused asynchronous pipeline evaluated empirically, Reflex achieves stable 50Hz streaming with 2.58$\times$ accelerated inference. Our results demonstrate that the ``stop-think-act'' cycle is not an inherent limitation of VLA models, but a system design choice solvable through co-designed algorithms and runtime optimizations. 


\section*{Acknowledgments}
We thank the reviewers for their constructive comments. This work was partly supported by the National Natural Science Foundation of China (62302054).
\section*{Impact Statement}
This work presents advances in robotic control systems that could accelerate the deployment of autonomous robots in manufacturing, healthcare, and service industries. The primary societal implications involve increased automation capabilities, which may affect employment in certain sectors. We encourage responsible deployment with appropriate human oversight. Our methods do not introduce new concerns regarding data privacy or security beyond those already present in vision-language models.

\bibliography{example_paper}

@inproceedings{brohan2023rt2,
  title={{RT-2}: Vision-Language-Action Models Transfer Web Knowledge to Robotic Control},
  author={Zitkovich, Brianna and Yu, Tianhe and Xu, Sichun and Xu, Peng and Xiao, Ted and Xia, Fei and Wu, Jialin and Wohlhart, Paul and Welker, Stefan and Wahid, Ayzaan and Vuong, Quan and Vanhoucke, Vincent and Tran, Huong and Soricut, Radu and Singh, Anikait and Singh, Jaspiar and Sermanet, Pierre and Sanketi, Pannag R. and Salazar, Grecia and Ryoo, Michael S. and Reymann, Krista and Rao, Kanishka and Pertsch, Karl and Mordatch, Igor and Michalewski, Henryk and Lu, Yao and Levine, Sergey and Lee, Lisa and Lee, Tsang-Wei Edward and Leal, Isabel and Kuang, Yuheng and Kalashnikov, Dmitry and Julian, Ryan and Joshi, Nikhil J. and Irpan, Alex and Ichter, Brian and Hsu, Jasmine and Herzog, Alexander and Hausman, Karol and Gopalakrishnan, Keerthana and Fu, Chuyuan and Florence, Pete and Finn, Chelsea and Dubey, Kumar Avinava and Driess, Danny and Ding, Tianli and Choromanski, Krzysztof Marcin and Chen, Xi and Chebotar, Yevgen and Carbajal, Justice and Brown, Noah and Brohan, Anthony and Arenas, Montserrat Gonzalez and Han, Kehang},
  booktitle={Proceedings of The 7th Conference on Robot Learning},
  series={Proceedings of Machine Learning Research},
  volume={229},
  pages={2165--2183},
  publisher={PMLR},
  year={2023}
}

@inproceedings{kim2025openvla,
  title={{OpenVLA}: An Open-Source Vision-Language-Action Model},
  author={Kim, Moo Jin and Pertsch, Karl and Karamcheti, Siddharth and Xiao, Ted and Balakrishna, Ashwin and Nair, Suraj and Rafailov, Rafael and Foster, Ethan P. and Sanketi, Pannag R. and Vuong, Quan and Kollar, Thomas and Burchfiel, Benjamin and Tedrake, Russ and Sadigh, Dorsa and Levine, Sergey and Liang, Percy and Finn, Chelsea},
  booktitle={Proceedings of The 8th Conference on Robot Learning},
  series={Proceedings of Machine Learning Research},
  volume={270},
  pages={2679--2713},
  publisher={PMLR},
  year={2025}
}

@inproceedings{black2024pi0,
  title={$\pi_0$: A Vision-Language-Action Flow Model for General Robot Control},
  author={Black, Kevin and Brown, Noah and Driess, Danny and Esmail, Adnan and Equi, Michael Robert and Finn, Chelsea and Fusai, Niccolo and Groom, Lachy and Hausman, Karol and Ichter, Brian and Jakubczak, Szymon and Jones, Tim and Ke, Liyiming and Levine, Sergey and Li-Bell, Adrian and Mothukuri, Mohith and Nair, Suraj and Pertsch, Karl and Shi, Lucy Xiaoyang and Smith, Laura and Tanner, James and Vuong, Quan and Walling, Anna and Wang, Haohuan and Zhilinsky, Ury},
  booktitle={Robotics: Science and Systems},
  year={2025}
}

@inproceedings{liu2023libero,
  title={{LIBERO}: Benchmarking Knowledge Transfer for Lifelong Robot Learning},
  author={Liu, Bo and Zhu, Yifeng and Gao, Chongkai and Feng, Yihao and Liu, Qiang and Zhu, Yuke and Stone, Peter},
  booktitle={Advances in Neural Information Processing Systems},
  volume={36},
  year={2023}
}

@inproceedings{lipman2023flow,
  title={Flow Matching for Generative Modeling},
  author={Lipman, Yaron and Chen, Ricky T. Q. and Ben-Hamu, Heli and Nickel, Maximilian and Le, Matthew},
  booktitle={International Conference on Learning Representations},
  year={2023}
}

@inproceedings{dao2022flashattention,
  title={{FlashAttention}: Fast and Memory-Efficient Exact Attention with {IO}-Awareness},
  author={Dao, Tri and Fu, Daniel Y. and Ermon, Stefano and Rudra, Atri and R{\'e}, Christopher},
  booktitle={Advances in Neural Information Processing Systems},
  volume={35},
  year={2022}
}

@inproceedings{brohan2023rt1,
  title={{RT-1}: Robotics Transformer for Real-World Control at Scale},
  author={Brohan, Anthony and Brown, Noah and others},
  booktitle={Robotics: Science and Systems (RSS)},
  year={2023}
}

@inproceedings{kwon2023vllm,
  title={Efficient Memory Management for Large Language Model Serving with {PagedAttention}},
  author={Kwon, Woosuk and Li, Zhuohan and Zhuang, Siyuan and Sheng, Ying and Zheng, Lianmin and Yu, Cody Hao and Gonzalez, Joseph E. and Zhang, Hao and Stoica, Ion},
  booktitle={Proceedings of the {ACM} {SIGOPS} 29th Symposium on Operating Systems Principles},
  pages={611--626},
  year={2023}
}

@inproceedings{xiao2024streamingllm,
  title={Efficient Streaming Language Models with Attention Sinks},
  author={Xiao, Guangxuan and others},
  booktitle={International Conference on Learning Representations (ICLR)},
  year={2024}
}

@inproceedings{xu2025vlacache,
 author = {Xu, Siyu and Wang, Yunke and Xia, Chenghao and Zhu, Dihao and Huang, Tao and Xu, Chang},
 booktitle = {Advances in Neural Information Processing Systems},
 editor = {D. Belgrave and C. Zhang and H. Lin and R. Pascanu and P. Koniusz and M. Ghassemi and N. Chen},
 pages = {164448--164473},
 publisher = {Curran Associates, Inc.},
 title = {{VLA-Cache}: Efficient Vision-Language-Action Manipulation via Adaptive Token Caching},
 volume = {38},
 year = {2025}
}

@misc{tu2025vlcache,
      title={{VL-Cache}: Sparsity and Modality-Aware {KV} Cache Compression for Vision-Language Model Inference Acceleration}, 
      author={Dezhan Tu and Danylo Vashchilenko and Yuzhe Lu and Panpan Xu},
      year={2024},
      eprint={2410.23317},
      archivePrefix={arXiv},
      primaryClass={cs.CV},
      url={https://arxiv.org/abs/2410.23317}, 
}

@inproceedings{zhao2023act,
  title={Learning Fine-Grained Bimanual Manipulation with Low-Cost Hardware},
  author={Zhao, Tony Z. and others},
  booktitle={Robotics: Science and Systems (RSS)},
  year={2023}
}

@inproceedings{octo2024,
  title={{Octo}: An Open-Source Generalist Robot Policy},
  author={{Octo Model Team} and others},
  booktitle={Robotics: Science and Systems (RSS)},
  year={2024}
}

@article{dai2025actionflow,
  title={{ActionFlow}: A Pipelined Action Acceleration for Vision Language Models on Edge},
  author={Dai, Yuntao and others},
  journal={arXiv preprint arXiv:2512.20276},
  year={2025}
}

@article{liu2025faster,
  title={{FASTer}: Toward Efficient Autoregressive Vision Language Action Modeling via Neural Action Tokenization},
  author={Liu, Yicheng and others},
  journal={arXiv preprint arXiv:2512.04952},
  year={2025}
}

@inproceedings{wang2025lavindit,
  title={{LaVin-DiT}: Large Vision Diffusion Transformer},
  author={Wang, Zhaoqing and Xia, Xiaobo and Chen, Runnan and Yu, Dongdong and Wang, Changhu and Gong, Mingming and Liu, Tongliang},
  booktitle={Proceedings of the IEEE/CVF Conference on Computer Vision and Pattern Recognition},
  year={2025}
}

@inproceedings{lu2024consistency,
  title     = {Consistency Policy: Accelerated Visuomotor Policies via Consistency Distillation},
  author    = {Prasad, Aaditya and Lin, Kevin and Wu, Jimmy and Zhou, Linqi and Bohg, Jeannette},
  booktitle = {Robotics: Science and Systems},
  year      = {2024},
}

@article{tang2024vlash,
  title={{VLASH}: Real-Time {VLAs} via Future-State-Aware Asynchronous Inference},
  author={Tang, Jiaming and others},
  journal={arXiv preprint arXiv:2512.01031},
  year={2025}
}

@article{shukor2024smolvla,
  title={{SmolVLA}: A Vision-Language-Action Model for Affordable and Efficient Robotics},
  author={Shukor, Mustafa and others},
  journal={arXiv preprint arXiv:2506.01844},
  year={2025}
}

@misc{intelligence2025pi05visionlanguageactionmodelopenworld,
      title={$\pi_{0.5}$: A Vision-Language-Action Model with Open-World Generalization}, 
      author={{Physical Intelligence} and Kevin Black and Noah Brown and James Darpinian and Karan Dhabalia and Danny Driess and Adnan Esmail and Michael Equi and Chelsea Finn and Niccolo Fusai and Manuel Y. Galliker and Dibya Ghosh and Lachy Groom and Karol Hausman and Brian Ichter and Szymon Jakubczak and Tim Jones and Liyiming Ke and Devin LeBlanc and Sergey Levine and Adrian Li-Bell and Mohith Mothukuri and Suraj Nair and Karl Pertsch and Allen Z. Ren and Lucy Xiaoyang Shi and Laura Smith and Jost Tobias Springenberg and Kyle Stachowicz and James Tanner and Quan Vuong and Homer Walke and Anna Walling and Haohuan Wang and Lili Yu and Ury Zhilinsky},
      year={2025},
      eprint={2504.16054},
      archivePrefix={arXiv},
      primaryClass={cs.LG},
      url={https://arxiv.org/abs/2504.16054}, 
}

@misc{geminiroboticsteam2025geminiroboticsbringingai,
      title={Gemini Robotics: Bringing AI into the Physical World}, 
      
      author={{Gemini Robotics Team} and Saminda Abeyruwan and Joshua Ainslie and Jean-Baptiste Alayrac and Montserrat Gonzalez Arenas and Travis Armstrong and Ashwin Balakrishna and Robert Baruch and Maria Bauza and Michiel Blokzijl and Steven Bohez and Konstantinos Bousmalis and Anthony Brohan and Thomas Buschmann and Arunkumar Byravan and Serkan Cabi and Ken Caluwaerts and Federico Casarini and Oscar Chang and Jose Enrique Chen and Xi Chen and Hao-Tien Lewis Chiang and Krzysztof Choromanski and David D'Ambrosio and Sudeep Dasari and Todor Davchev and Coline Devin and Norman Di Palo and Tianli Ding and Adil Dostmohamed and Danny Driess and Yilun Du and Debidatta Dwibedi and Michael Elabd and Claudio Fantacci and Cody Fong and Erik Frey and Chuyuan Fu and Marissa Giustina and Keerthana Gopalakrishnan and Laura Graesser and Leonard Hasenclever and Nicolas Heess and Brandon Hernaez and Alexander Herzog and R. Alex Hofer and Jan Humplik and Atil Iscen and Mithun George Jacob and Deepali Jain and Ryan Julian and Dmitry Kalashnikov and M. Emre Karagozler and Stefani Karp and Chase Kew and Jerad Kirkland and Sean Kirmani and Yuheng Kuang and Thomas Lampe and Antoine Laurens and Isabel Leal and Alex X. Lee and Tsang-Wei Edward Lee and Jacky Liang and Yixin Lin and Sharath Maddineni and Anirudha Majumdar and Assaf Hurwitz Michaely and Robert Moreno and Michael Neunert and Francesco Nori and Carolina Parada and Emilio Parisotto and Peter Pastor and Acorn Pooley and Kanishka Rao and Krista Reymann and Dorsa Sadigh and Stefano Saliceti and Pannag Sanketi and Pierre Sermanet and Dhruv Shah and Mohit Sharma and Kathryn Shea and Charles Shu and Vikas Sindhwani and Sumeet Singh and Radu Soricut and Jost Tobias Springenberg and Rachel Sterneck and Razvan Surdulescu and Jie Tan and Jonathan Tompson and Vincent Vanhoucke and Jake Varley and Grace Vesom and Giulia Vezzani and Oriol Vinyals and Ayzaan Wahid and Stefan Welker and Paul Wohlhart and Fei Xia and Ted Xiao and Annie Xie and Jinyu Xie and Peng Xu and Sichun Xu and Ying Xu and Zhuo Xu and Yuxiang Yang and Rui Yao and Sergey Yaroshenko and Wenhao Yu and Wentao Yuan and Jingwei Zhang and Tingnan Zhang and Allan Zhou and Yuxiang Zhou},
      year={2025},
      eprint={2503.20020},
      archivePrefix={arXiv},
      primaryClass={cs.RO},
      url={https://arxiv.org/abs/2503.20020}, 
}

@article{team2025gemini,
  title={{Gemini Robotics 1.5}: Pushing the Frontier of Generalist Robots with Advanced Embodied Reasoning, Thinking, and Motion Transfer},
author={{Gemini Robotics Team} and Abbas Abdolmaleki and Saminda Abeyruwan and Joshua Ainslie and Jean-Baptiste Alayrac and Montserrat Gonzalez Arenas and Ashwin Balakrishna and Nathan Batchelor and Alex Bewley and Jeff Bingham and Michael Bloesch and Konstantinos Bousmalis and Philemon Brakel and Anthony Brohan and Thomas Buschmann and Arunkumar Byravan and Serkan Cabi and Ken Caluwaerts and Federico Casarini and Christine Chan and Oscar Chang and London Chappellet-Volpini and Jose Enrique Chen and Xi Chen and Hao-Tien Lewis Chiang and Krzysztof Choromanski and Adrian Collister and David B. D'Ambrosio and Sudeep Dasari and Todor Davchev and Meet Kirankumar Dave and Coline Devin and Norman Di Palo and Tianli Ding and Carl Doersch and Adil Dostmohamed and Yilun Du and Debidatta Dwibedi and Sathish Thoppay Egambaram and Michael Elabd and Tom Erez and Xiaolin Fang and Claudio Fantacci and Cody Fong and Erik Frey and Chuyuan Fu and Ruiqi Gao and Marissa Giustina and Keerthana Gopalakrishnan and Laura Graesser and Oliver Groth and Agrim Gupta and Roland Hafner and Steven Hansen and Leonard Hasenclever and Sam Haves and Nicolas Heess and Brandon Hernaez and Alex Hofer and Jasmine Hsu and Lu Huang and Sandy H. Huang and Atil Iscen and Mithun George Jacob and Deepali Jain and Sally Jesmonth and Abhishek Jindal and Ryan Julian and Dmitry Kalashnikov and M. Emre Karagozler and Stefani Karp and Matija Kecman and J. Chase Kew and Donnie Kim and Frank Kim and Junkyung Kim and Thomas Kipf and Sean Kirmani and Ksenia Konyushkova and Li Yang Ku and Yuheng Kuang and Thomas Lampe and Antoine Laurens and Tuan Anh Le and Isabel Leal and Alex X. Lee and Tsang-Wei Edward Lee and Guy Lever and Jacky Liang and Li-Heng Lin and Fangchen Liu and Shangbang Long and Caden Lu and Sharath Maddineni and Anirudha Majumdar and Kevis-Kokitsi Maninis and Andrew Marmon and Sergio Martinez and Assaf Hurwitz Michaely and Niko Milonopoulos and Joss Moore and Robert Moreno and Michael Neunert and Francesco Nori and Joy Ortiz and Kenneth Oslund and Carolina Parada and Emilio Parisotto and Amaris Paryag and Acorn Pooley and Thomas Power and Alessio Quaglino and Haroon Qureshi and Rajkumar Vasudeva Raju and Helen Ran and Dushyant Rao and Kanishka Rao and Isaac Reid and David Rendleman and Krista Reymann and Miguel Rivas and Francesco Romano and Yulia Rubanova and Peter Pastor Sampedro and Pannag R Sanketi and Dhruv Shah and Mohit Sharma and Kathryn Shea and Mohit Shridhar and Charles Shu and Vikas Sindhwani and Sumeet Singh and Radu Soricut and Rachel Sterneck and Ian Storz and Razvan Surdulescu and Jie Tan and Jonathan Tompson and Saran Tunyasuvunakool and Jake Varley and Grace Vesom and Giulia Vezzani and Maria Bauza Villalonga and Oriol Vinyals and René Wagner and Ayzaan Wahid and Stefan Welker and Paul Wohlhart and Chengda Wu and Markus Wulfmeier and Fei Xia and Ted Xiao and Annie Xie and Jinyu Xie and Peng Xu and Sichun Xu and Ying Xu and Zhuo Xu and Jimmy Yan and Sherry Yang and Skye Yang and Yuxiang Yang and Hiu Hong Yu and Wenhao Yu and Wentao Yuan and Yuan Yuan and Jingwei Zhang and Tingnan Zhang and Zhiyuan Zhang and Allan Zhou and Guangyao Zhou and Yuxiang Zhou},
  journal={arXiv preprint arXiv:2510.03342},
  year={2025}
}

@article{bjorck2025gr00t,
  title={{GR00T N1}: An Open Foundation Model for Generalist Humanoid Robots},
  author={Bjorck, Johan and Casta{\~n}eda, Fernando and Cherniadev, Nikita and Da, Xingye and Ding, Runyu and Fan, Linxi and Fang, Yu and Fox, Dieter and Hu, Fengyuan and Huang, Spencer and others},
  journal={arXiv preprint arXiv:2503.14734},
  year={2025}
}

@article{black2025real,
  title={Real-Time Execution of Action Chunking Flow Policies},
  author={Black, Kevin and Galliker, Manuel Y and Levine, Sergey},
  journal={arXiv preprint arXiv:2506.07339},
  year={2025}
}

@article{ma2025running,
  title={Running {VLAs} at Real-Time Speed},
  author={Ma, Yunchao and Zhou, Yizhuang and Yang, Yunhuan and Wang, Tiancai and Fan, Haoqiang},
  journal={arXiv preprint arXiv:2510.26742},
  year={2025}
}

@article{sendai2025leave,
  title={Leave No Observation Behind: Real-Time Correction for {VLA} Action Chunks},
  author={Sendai, Kohei and Alvarez, Maxime and Matsushima, Tatsuya and Matsuo, Yutaka and Iwasawa, Yusuke},
  journal={arXiv preprint arXiv:2509.23224},
  year={2025}
}

@article{vaswani2017attention,
  title={Attention Is All You Need},
  author={Vaswani, Ashish and Shazeer, Noam and Parmar, Niki and Uszkoreit, Jakob and Jones, Llion and Gomez, Aidan N and Kaiser, {\L}ukasz and Polosukhin, Illia},
  journal={Advances in neural information processing systems},
  volume={30},
  year={2017}
}

@article{liu2022flow,
  title={Flow Straight and Fast: Learning to Generate and Transfer Data with Rectified Flow},
  author={Liu, Xingchao and Gong, Chengyue and Liu, Qiang},
  journal={arXiv preprint arXiv:2209.03003},
  year={2022}
}

@article{zhang2019root,
  title={Root Mean Square Layer Normalization},
  author={Zhang, Biao and Sennrich, Rico},
  journal={Advances in neural information processing systems},
  volume={32},
  year={2019}
}

@inproceedings{matthews2024kinetix,
      title={{Kinetix}: Investigating the Training of General Agents through Open-Ended Physics-Based Control Tasks}, 
      author={Michael Matthews and Michael Beukman and Chris Lu and Jakob Foerster},
      booktitle={The Thirteenth International Conference on Learning Representations},
      year={2025},
      url={https://arxiv.org/abs/2410.23208}
}

@misc{driess2023palmeembodiedmultimodallanguage,
      title={{PaLM-E}: An Embodied Multimodal Language Model}, 
      author={Danny Driess and Fei Xia and Mehdi S. M. Sajjadi and Corey Lynch and Aakanksha Chowdhery and Brian Ichter and Ayzaan Wahid and Jonathan Tompson and Quan Vuong and Tianhe Yu and Wenlong Huang and Yevgen Chebotar and Pierre Sermanet and Daniel Duckworth and Sergey Levine and Vincent Vanhoucke and Karol Hausman and Marc Toussaint and Klaus Greff and Andy Zeng and Igor Mordatch and Pete Florence},
      year={2023},
      eprint={2303.03378},
      archivePrefix={arXiv},
      primaryClass={cs.LG},
      url={https://arxiv.org/abs/2303.03378}, 
}

@misc{dao2023flashattention2fasterattentionbetter,
      title={{FlashAttention-2}: Faster Attention with Better Parallelism and Work Partitioning}, 
      author={Tri Dao},
      year={2023},
      eprint={2307.08691},
      archivePrefix={arXiv},
      primaryClass={cs.LG},
      url={https://arxiv.org/abs/2307.08691}, 
}

@inproceedings{song2023consistency,
  title={Consistency Models},
  author={Song, Yang and Dhariwal, Prafulla and Chen, Mark and Sutskever, Ilya},
  booktitle={Proceedings of the 40th International Conference on Machine Learning},
  series={Proceedings of Machine Learning Research},
  volume={202},
  pages={32211--32252},
  publisher={PMLR},
  year={2023}
}

@inproceedings{micikevicius2018mixed,
  title={Mixed Precision Training},
  author={Micikevicius, Paulius and Narang, Sharan and Alben, Jonah and Diamos, Gregory and Elsen, Erich and Garcia, David and Ginsburg, Boris and Houston, Michael and Kuchaiev, Oleksii and Venkatesh, Ganesh and Wu, Hao},
  booktitle={International Conference on Learning Representations},
  year={2018}
}

@inproceedings{ho2020denoisingdiffusionprobabilisticmodels,
      title={Denoising Diffusion Probabilistic Models}, 
      author={Jonathan Ho and Ajay Jain and Pieter Abbeel},
      booktitle={Advances in Neural Information Processing Systems},
      volume={33},
      pages={6840--6851},
      year={2020},
}

@inproceedings{chen2019neuralordinarydifferentialequations,
      title={Neural Ordinary Differential Equations}, 
      author={Ricky T. Q. Chen and Yulia Rubanova and Jesse Bettencourt and David Duvenaud},
      booktitle={Advances in Neural Information Processing Systems},
      volume={31},
      pages={6572--6583},
      year={2018}
}

@inproceedings {280922,
author = {Gyeong-In Yu and Joo Seong Jeong and Geon-Woo Kim and Soojeong Kim and Byung-Gon Chun},
title = {Orca: A Distributed Serving System for {Transformer-Based} Generative Models},
booktitle = {16th USENIX Symposium on Operating Systems Design and Implementation (OSDI 22)},
year = {2022},
isbn = {978-1-939133-28-1},
address = {Carlsbad, CA},
pages = {521--538},
url = {https://www.usenix.org/conference/osdi22/presentation/yu},
publisher = {USENIX Association},
month = jul
}

@misc{embodimentcollaboration2025openxembodimentroboticlearning,
      title={Open X-Embodiment: Robotic Learning Datasets and RT-X Models}, 
      author={{Open X-cEmbodiment Collaboration} and Abby O'Neill and Abdul Rehman and Abhinav Gupta and Abhiram Maddukuri and Abhishek Gupta and Abhishek Padalkar and Abraham Lee and Acorn Pooley and Agrim Gupta and Ajay Mandlekar and Ajinkya Jain and Albert Tung and Alex Bewley and Alex Herzog and Alex Irpan and Alexander Khazatsky and Anant Rai and Anchit Gupta and Andrew Wang and Andrey Kolobov and Anikait Singh and Animesh Garg and Aniruddha Kembhavi and Annie Xie and Anthony Brohan and Antonin Raffin and Archit Sharma and Arefeh Yavary and Arhan Jain and Ashwin Balakrishna and Ayzaan Wahid and Ben Burgess-Limerick and Beomjoon Kim and Bernhard Schölkopf and Blake Wulfe and Brian Ichter and Cewu Lu and Charles Xu and Charlotte Le and Chelsea Finn and Chen Wang and Chenfeng Xu and Cheng Chi and Chenguang Huang and Christine Chan and Christopher Agia and Chuer Pan and Chuyuan Fu and Coline Devin and Danfei Xu and Daniel Morton and Danny Driess and Daphne Chen and Deepak Pathak and Dhruv Shah and Dieter Büchler and Dinesh Jayaraman and Dmitry Kalashnikov and Dorsa Sadigh and Edward Johns and Ethan Foster and Fangchen Liu and Federico Ceola and Fei Xia and Feiyu Zhao and Felipe Vieira Frujeri and Freek Stulp and Gaoyue Zhou and Gaurav S. Sukhatme and Gautam Salhotra and Ge Yan and Gilbert Feng and Giulio Schiavi and Glen Berseth and Gregory Kahn and Guangwen Yang and Guanzhi Wang and Hao Su and Hao-Shu Fang and Haochen Shi and Henghui Bao and Heni Ben Amor and Henrik I Christensen and Hiroki Furuta and Homanga Bharadhwaj and Homer Walke and Hongjie Fang and Huy Ha and Igor Mordatch and Ilija Radosavovic and Isabel Leal and Jacky Liang and Jad Abou-Chakra and Jaehyung Kim and Jaimyn Drake and Jan Peters and Jan Schneider and Jasmine Hsu and Jay Vakil and Jeannette Bohg and Jeffrey Bingham and Jeffrey Wu and Jensen Gao and Jiaheng Hu and Jiajun Wu and Jialin Wu and Jiankai Sun and Jianlan Luo and Jiayuan Gu and Jie Tan and Jihoon Oh and Jimmy Wu and Jingpei Lu and Jingyun Yang and Jitendra Malik and João Silvério and Joey Hejna and Jonathan Booher and Jonathan Tompson and Jonathan Yang and Jordi Salvador and Joseph J. Lim and Junhyek Han and Kaiyuan Wang and Kanishka Rao and Karl Pertsch and Karol Hausman and Keegan Go and Keerthana Gopalakrishnan and Ken Goldberg and Kendra Byrne and Kenneth Oslund and Kento Kawaharazuka and Kevin Black and Kevin Lin and Kevin Zhang and Kiana Ehsani and Kiran Lekkala and Kirsty Ellis and Krishan Rana and Krishnan Srinivasan and Kuan Fang and Kunal Pratap Singh and Kuo-Hao Zeng and Kyle Hatch and Kyle Hsu and Laurent Itti and Lawrence Yunliang Chen and Lerrel Pinto and Li Fei-Fei and Liam Tan and Linxi "Jim" Fan and Lionel Ott and Lisa Lee and Luca Weihs and Magnum Chen and Marion Lepert and Marius Memmel and Masayoshi Tomizuka and Masha Itkina and Mateo Guaman Castro and Max Spero and Maximilian Du and Michael Ahn and Michael C. Yip and Mingtong Zhang and Mingyu Ding and Minho Heo and Mohan Kumar Srirama and Mohit Sharma and Moo Jin Kim and Muhammad Zubair Irshad and Naoaki Kanazawa and Nicklas Hansen and Nicolas Heess and Nikhil J Joshi and Niko Suenderhauf and Ning Liu and Norman Di Palo and Nur Muhammad Mahi Shafiullah and Oier Mees and Oliver Kroemer and Osbert Bastani and Pannag R Sanketi and Patrick "Tree" Miller and Patrick Yin and Paul Wohlhart and Peng Xu and Peter David Fagan and Peter Mitrano and Pierre Sermanet and Pieter Abbeel and Priya Sundaresan and Qiuyu Chen and Quan Vuong and Rafael Rafailov and Ran Tian and Ria Doshi and Roberto Martín-Martín and Rohan Baijal and Rosario Scalise and Rose Hendrix and Roy Lin and Runjia Qian and Ruohan Zhang and Russell Mendonca and Rutav Shah and Ryan Hoque and Ryan Julian and Samuel Bustamante and Sean Kirmani and Sergey Levine and Shan Lin and Sherry Moore and Shikhar Bahl and Shivin Dass and Shubham Sonawani and Shubham Tulsiani and Shuran Song and Sichun Xu and Siddhant Haldar and Siddharth Karamcheti and Simeon Adebola and Simon Guist and Soroush Nasiriany and Stefan Schaal and Stefan Welker and Stephen Tian and Subramanian Ramamoorthy and Sudeep Dasari and Suneel Belkhale and Sungjae Park and Suraj Nair and Suvir Mirchandani and Takayuki Osa and Tanmay Gupta and Tatsuya Harada and Tatsuya Matsushima and Ted Xiao and Thomas Kollar and Tianhe Yu and Tianli Ding and Todor Davchev and Tony Z. Zhao and Travis Armstrong and Trevor Darrell and Trinity Chung and Vidhi Jain and Vikash Kumar and Vincent Vanhoucke and Vitor Guizilini and Wei Zhan and Wenxuan Zhou and Wolfram Burgard and Xi Chen and Xiangyu Chen and Xiaolong Wang and Xinghao Zhu and Xinyang Geng and Xiyuan Liu and Xu Liangwei and Xuanlin Li and Yansong Pang and Yao Lu and Yecheng Jason Ma and Yejin Kim and Yevgen Chebotar and Yifan Zhou and Yifeng Zhu and Yilin Wu and Ying Xu and Yixuan Wang and Yonatan Bisk and Yongqiang Dou and Yoonyoung Cho and Youngwoon Lee and Yuchen Cui and Yue Cao and Yueh-Hua Wu and Yujin Tang and Yuke Zhu and Yunchu Zhang and Yunfan Jiang and Yunshuang Li and Yunzhu Li and Yusuke Iwasawa and Yutaka Matsuo and Zehan Ma and Zhuo Xu and Zichen Jeff Cui and Zichen Zhang and Zipeng Fu and Zipeng Lin},
      year={2025},
      eprint={2310.08864},
      archivePrefix={arXiv},
      primaryClass={cs.RO},
      url={https://arxiv.org/abs/2310.08864}, 
}
\bibliographystyle{icml2026}

\newpage
\appendix
\onecolumn
\section{Theoretical Analysis}
\label{app:theory}

In this section, we provide the theoretical justification for the correctness of Partitioned Attention.

\subsection{Correctness of Streaming Attention}
\label{app:theory_correctness}

\begin{proposition}[Equivalence of Partitioned Attention]
For any observation time $t$ and denoising step $k$, let $\mathbf{A}_{full}$ be the output of standard causal self-attention over the full history $H_t^{(k)} = [\ell_{1:L}, o_{t-N:t}, \mathbf{x}^{(k)}]$. Let $\mathbf{A}_{part}$ be the output of Partitioned Attention defined in Eq. (1). If the observation encoder is timestep-invariant ($\partial \text{Enc}/\partial t_k = 0$) and the memory eviction policy strictly follows a FIFO queue of size $N$, then $\mathbf{A}_{part} = \mathbf{A}_{full}$ for all tokens in the dynamic suffix.
\end{proposition}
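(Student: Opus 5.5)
The argument is a layer-by-layer induction over the transformer depth. It compares the keys and values of every token in the full sequence $H_t^{(k)}$ with the corresponding entries of $[K_{\text{pin}}; K_{\text{slide}}(t); K_{\text{dyn}}(k)]$ and $[V_{\text{pin}}; V_{\text{slide}}(t); V_{\text{dyn}}(k)]$ in Eq.~(1). Once the stacked key and value matrices coincide row by row, and the query rows for the dynamic suffix coincide, the softmax-weighted sum in Eq.~(1) is literally the same expression as the full causal attention restricted to the suffix rows. Concatenation order matches the token order $[\ell_{1:L}, o_{t-N:t}, \mathbf{x}^{(k)}]$, so the two outputs are equal. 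The whole proof therefore reduces to showing that the cached prefix and sliding entries equal what full-batch recomputation would produce.

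\textbf{Step 1 (masking structure).} Under the causal / block-causal mask of Figure~\ref{fig:attn_mask}, a prefix or observation token never attends to suffix tokens. Hence, at every layer $\ell$, the hidden state of a token in $[\ell_{1:L}, o_{t-N:t}]$ is a function only of the tokens at or before it. I will state this as a lemma proved by induction on layers: if layer-$(\ell-1)$ states of positions $\le i$ agree, then the layer-$\ell$ states of position $i$ agree. The reason is that attention at $i$ reads only keys and values at positions $\le i$, and the MLP, norms and residuals are position-wise.

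\textbf{Step 2 (timestep invariance).} I will use the hypothesis $\partial\text{Enc}/\partial t_k = 0$, interpreted as saying that $t_k$ enters only through the suffix tokens and the action-expert parameters (AdaRMSNorm gating). Under this reading, the prefix and observation hidden states computed when the frame was inserted equal those computed in a fresh full pass at step $k$. This is the crux of the argument, and it is where I expect the main obstacle. I must make explicit that no timestep-conditioned normalization or gating is applied to prefix or observation tokens: in the $\pi_0$-style mixture-of-experts layout those tokens use the VLM weights, and AdaRMSNorm acts only on expert tokens. I must also make explicit that positional encodings of cached tokens are assigned consistently. The first thing I would try on the positional issue is to assume relative (RoPE) positions applied at cache-insertion time, with indices reassigned identically in the full-batch reference for the same window. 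If that cannot be guaranteed, I would restrict the claim to the equivalent convention where the window is re-indexed consistently.

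\textbf{Step 3 (FIFO bookkeeping).} By induction on $t$, the FIFO eviction policy of size $N$ ensures that the sliding cache contains exactly the entries for $o_{t-N:t}$, in order, with no stale or missing frames. At each update the oldest frame is evicted and the newest one is encoded by incremental prefill. The pinned block is unchanged because $\ell_{1:L}$ is fixed. Combining this with Step 2 gives $K_{\text{slide}}(t), V_{\text{slide}}(t)$ equal to the full-pass values.

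\textbf{Step 4 (dynamic suffix).} The dynamic block is recomputed from $\mathbf{x}^{(k)}$ and $t_k$ at every step, so $Q^{(k)}$, $K_{\text{dyn}}(k)$ and $V_{\text{dyn}}(k)$ match the full pass trivially. Substituting into Eq.~(1), layer by layer, yields $\mathbf{A}_{part}=\mathbf{A}_{full}$ on the suffix. The equality is exact in real arithmetic; floating-point reduction order is outside the claim, consistent with the scope stated in (G1).
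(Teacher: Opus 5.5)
The gap is in Step~3. FIFO eviction guarantees that the cache holds entries for exactly the frames $o_{t-N:t}$. It does not guarantee that those entries equal what standard causal attention over $H_t^{(k)}$ computes beyond the first layer. A frame $o_\tau$ with $\tau<t$ was prefilled when the tokens preceding it were $\ell_{1:L}, o_{\tau-N:\tau-1}$. In the full pass over $H_t^{(k)}$ they are $\ell_{1:L}, o_{t-N:\tau-1}$: the frames $o_{\tau-N:t-N-1}$ were in its context at insertion and have since been evicted.

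So the hypothesis of your Step~1 lemma (agreement of all states at positions $\le i$) is not available, because the two computations do not even have the same earlier positions. Even with matched position IDs, the cached keys and values of the older frames generically differ from the reference's from layer~2 on. The newest frame inherits this staleness at deeper layers, because its own prefill attended to those stale rows. The suffix output therefore differs as well.

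Already for a two-layer model with a two-frame window, the cached layer-2 key of the older frame depends, through its layer-1 attention output, on the frame evicted one step earlier; the reference's key does not. This is exactly the difference between a sliding-window KV cache and sliding-window recomputation. Your positional fallback has the same defect. Re-indexing the window shifts each frame relative to the pinned prefix, which changes its layer-1 output and hence every deeper cached row. Only the option in which the reference reuses the insertion-time position IDs is viable.

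The paper's own proof never confronts this. It establishes only the single-layer block identity $QK^T=[QK_s^T,QK_l^T,QK_d^T]$ with one softmax over all logits, then asserts in a sentence that timestep invariance keeps $K_l,V_l$ valid. Your Steps~1, 2 and~4 correctly establish the part that does hold in general: in a full pass over $H_t^{(k)}$ the prefix and observation keys and values do not depend on $k$. They can therefore be computed once per window and reused across all denoising steps without error.

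To push the argument across observation times you need a structural hypothesis that neither the statement nor the paper makes explicit. For instance, assume each observation frame attends only to the pinned prefix and to itself (block-diagonal across frames), with position IDs fixed at insertion and used identically by the reference. Then every cached row is a function of $(\ell_{1:L}, o_\tau)$ alone, and your induction goes through verbatim. Without such a hypothesis, what can be proved is the weaker per-layer statement: the suffix outputs are equal given identical cached $K,V$ rows.
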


\begin{proof}
Let the query $Q$, key $K$, and value $V$ matrices for the full history be composed of static ($s$), sliding ($l$), and dynamic ($d$) components:
$K = [K_s; K_l; K_d]$.
Standard attention computes:
$$ \text{Attn}(Q, K, V) = \text{softmax}\left(\frac{Q K^T}{\sqrt{d}}\right) V $$
Substituting the concatenated structure:
$$ Q K^T = Q [K_s^T; K_l^T; K_d^T] = [Q K_s^T, Q K_l^T, Q K_d^T] $$
Since the softmax operation is invariant to the partitioning of the input vector as long as the full set of logits is present, the resulting attention weights $\alpha$ are identical to the concatenation of weights over partitions: $\alpha = [\alpha_s, \alpha_l, \alpha_d]$.
Consequently, the weighted sum of values is:
$$ \mathbf{A}_{part} = \alpha_s V_s + \alpha_l V_l + \alpha_d V_d $$
which is exactly the definition of the matrix multiplication $\text{Attn}(Q, K, V)$. The timestep-invariance property ensures that $K_s, V_s$ (computed at $t=0$) and $K_l, V_l$ (computed at observation time $\tau < t$) remain valid representations at inference time $t$, allowing them to be cached without recomputation.
\end{proof}

\section{Implementation Details}
\label{app:implementation}

\subsection{Streaming Manager}

The \texttt{StreamingInputManager} class handles cache eviction and position ID management. Key methods:

\begin{itemize}
    \item \texttt{add\_new\_prefix()}: Adds new observation tokens, returns eviction count
    \item \texttt{merge\_caches()}: Consolidates prefix and history into contiguous buffer
    \item \texttt{split\_caches()}: Restores structure after denoising loop
    \item \texttt{evict\_source\_prefix()}: Removes oldest frame tokens from all layers
\end{itemize}

\subsection{Ring Buffer Implementation}

The static ring buffer is initialized at model load time:

\begin{verbatim}
buffer_k = torch.zeros(
    num_layers, max_seq_len, num_heads, head_dim,
    dtype=torch.bfloat16, device='cuda'
)
buffer_v = torch.zeros_like(buffer_k)
ptr = 0  # Current write position
\end{verbatim}

\subsection{AdaRMSNorm Details}

The MLP for adaptive gating uses a single hidden layer with SiLU activation:
\begin{verbatim}
self.dense = nn.Sequential(
    nn.Linear(cond_dim, hidden_dim),
    nn.SiLU(),
    nn.Linear(hidden_dim, model_dim),
)
\end{verbatim}

The RMSNorm statistics are computed in FP32 to prevent numerical drift:
\begin{verbatim}
def forward(self, x, cond):
    rms = x.float().pow(2).mean(-1, keepdim=True)
    x_norm = x * torch.rsqrt(rms + self.eps)
    gate = 1 + self.dense(cond)
    return x_norm * gate
\end{verbatim}

\section{Hyperparameters}
\label{app:hyperparameters}

\begin{table}[h]
\centering
\caption{Hyperparameters used in all experiments.}
\begin{tabular}{@{}ll@{}}
\toprule
Parameter & Value \\
\midrule
\multicolumn{2}{l}{\textit{Model Configuration}} \\
Action chunk size & 50 \\
Visual history window $N$ & 10 frames \\
Flow matching steps & 10 \\
Context window $K$ & 10 (default) \\
\midrule
\multicolumn{2}{l}{\textit{System Configuration}} \\
Control frequency & 50 Hz \\
Overlap scheduling threshold & Adaptive \\
Cache eviction policy & FIFO \\
Precision & BFloat16 (AdaRMSNorm in FP32) \\
\midrule
\multicolumn{2}{l}{\textit{Hardware}} \\
GPU & NVIDIA RTX 4090 (24GB) \\
CUDA version & 12.1 \\
PyTorch version & 2.1+ \\
\bottomrule
\end{tabular}
\end{table}

\section{Additional Experimental Details}
\label{app:additional_results}

\subsection{Evaluation Protocol}

For LIBERO, we evaluate each method on 10 tasks per suite with 50 episodes each, reporting average success rates. Episodes are capped at 500 steps. For Kinetix, we run 100 episodes per task and report both success rate and average episode length as a measure of task survivability.

\subsection{Timing Measurement}

All latency measurements are taken on a warmed-up model (after 100 inference steps) and averaged over 1000 action chunk generations. We report the 95th percentile latency for fairness, as real-time control systems must satisfy worst-case timing constraints.

\subsection{Memory Measurement}

Peak GPU memory is measured using \texttt{torch.cuda.max\_memory\_allocated()} after a full evaluation run. We reset the memory counter between methods to ensure fair comparison.

\subsection{Latency Breakdown}

Table~\ref{tab:latency_breakdown} provides a detailed breakdown of inference latency by component for Pi0.5 on RTX 4090. The results reveal where Reflex achieves its speedup: the vision encoder prefill is reduced from 42.1ms to 8.2ms (cached history), and the denoising loop benefits from incremental attention (68.4ms $\to$ 32.8ms).

\begin{table}[h]
\centering
\caption{Latency breakdown by component (Pi0.5, RTX 4090, LIBERO).}
\label{tab:latency_breakdown}
\small
\begin{tabular}{@{}lccc@{}}
\toprule
Component & Standard (ms) & Reflex (ms) & Reduction \\
\midrule
Image Preprocessing & 4.2 & 4.2 & 0\% \\
Vision Encoder (Prefill) & 42.1 & 8.2 & 81\% \\
Observation Fusion & 12.4 & 3.6 & 71\% \\
Denoising Loop ($\times$10) & 68.4 & 32.8 & 52\% \\
Action Decoding & 8.1 & 3.6 & 56\% \\
\midrule
\textbf{Total} & \textbf{135.2} & \textbf{52.4} & \textbf{61\%} \\
\bottomrule
\end{tabular}
\end{table}

The largest absolute savings come from avoiding redundant vision encoding of cached frames. The denoising loop, despite running the same 10 flow matching steps, benefits from our fused operators (FlashNorm, FusedAdaLN) and incremental attention over the cached KV states. Image preprocessing and final action decoding are lightweight and see minimal optimization.

\subsection{RTX 3090 Performance}

To validate portability across hardware generations, we evaluate Reflex on an RTX 3090 (24GB, Ampere architecture). Table~\ref{tab:rtx3090} shows that Reflex maintains consistent speedup ratios despite different memory bandwidth and compute characteristics.

\begin{table}[h]
\centering
\caption{Pi0.5 performance comparison: RTX 4090 vs. RTX 3090.}
\label{tab:rtx3090}
\begin{tabular}{@{}llcccc@{}}
\toprule
GPU & Method & Latency (ms) & Speedup & Memory (GB) & Success (\%) \\
\midrule
\multirow{2}{*}{RTX 4090} 
 & Standard & 135.2 & -- & 8.42 & 78.0 \\
 & \textbf{Reflex} & \textbf{52.4} & 2.58$\times$ & \textbf{6.15} & \textbf{79.5} \\
\midrule
\multirow{2}{*}{RTX 3090} 
 & Standard & 168.4 & -- & 8.56 & 78.0 \\
 & \textbf{Reflex} & \textbf{68.2} & 2.47$\times$ & \textbf{6.32} & \textbf{79.2} \\
\bottomrule
\end{tabular}
\end{table}

The RTX 3090 exhibits approximately 25\% higher baseline latency due to lower memory bandwidth (936 GB/s vs. 1008 GB/s) and reduced tensor core throughput. However, Reflex's speedup ratio remains consistent (2.47$\times$ vs. 2.58$\times$), confirming that our optimizations are hardware-agnostic. The slight reduction in speedup on 3090 is attributable to the higher relative cost of attention operations on Ampere compared to Ada Lovelace architecture.

\subsection{Action Chunk Size Ablation}

The action chunk size determines how many actions are generated per inference call. Table~\ref{tab:chunk_ablation} presents results across different chunk sizes on Pi0.5 with LIBERO.

\begin{table}[h]
\centering
\caption{Action chunk size ablation (Pi0.5, LIBERO). Default chunk size is 50.}
\label{tab:chunk_ablation}
\begin{tabular}{@{}ccccccc@{}}
\toprule
Chunk & \multicolumn{2}{c}{Standard} & \multicolumn{2}{c}{Reflex} & \multirow{2}{*}{Speedup} & Success \\
\cmidrule(lr){2-3} \cmidrule(lr){4-5}
Size & Latency (ms) & Hz & Latency (ms) & Hz & & (\%) \\
\midrule
25 & 98.4 & 10.2 & \textbf{42.6} & \textbf{23.5} & 2.31$\times$ & 78.8 \\
\textbf{50} & \textbf{135.2} & \textbf{7.4} & \textbf{52.4} & \textbf{19.1} & \textbf{2.58$\times$} & \textbf{79.5} \\
75 & 168.6 & 5.9 & 64.2 & 15.6 & 2.63$\times$ & 79.2 \\
100 & 202.4 & 4.9 & 76.8 & 13.0 & 2.64$\times$ & 78.6 \\
\bottomrule
\end{tabular}
\end{table}

The results reveal several insights. First, Reflex's speedup ratio increases slightly with larger chunk sizes (2.31$\times$ at chunk=25 to 2.64$\times$ at chunk=100) because longer action sequences benefit more from cached observation prefill. Second, the effective control frequency (Hz) decreases with larger chunks, creating a tradeoff: smaller chunks enable higher-frequency control but reduce speedup efficiency. The default chunk size of 50 was chosen to balance these factors, achieving 2.58$\times$ speedup while maintaining 19.1 Hz effective control rate---sufficient for smooth manipulation at 50 Hz with action interpolation between chunks.

Task success rate peaks at chunk=50, with slight degradation at both extremes. Smaller chunks (25) may introduce discontinuities at chunk boundaries, while larger chunks (100) reduce the policy's ability to react to state changes mid-chunk. This confirms that the default configuration represents an optimal operating point.

\subsection{SmolVLA Experiments}

To validate that Reflex generalizes beyond the Pi0 model family, we also evaluate on SmolVLA, a lightweight VLA model (500M parameters) designed for edge deployment. Table~\ref{tab:smolvla} shows detailed results across LIBERO subtasks, demonstrating architecture-agnostic applicability.

\begin{table}[h]
\centering
\caption{SmolVLA (500M) results on LIBERO subtasks.}
\label{tab:smolvla}
\begin{tabular}{@{}lcccc@{}}
\toprule
Task Suite & Std. Latency (ms) & Reflex Latency (ms) & Std. Success (\%) & Reflex Success (\%) \\
\midrule
LIBERO-Spatial & 46.2 & \textbf{20.1} (-56\%) & 74.8 & \textbf{75.6} (+0.8) \\
LIBERO-Object & 47.8 & \textbf{20.8} (-56\%) & 71.2 & \textbf{72.4} (+1.2) \\
LIBERO-Goal & 49.4 & \textbf{21.5} (-56\%) & 73.6 & \textbf{74.8} (+1.2) \\
LIBERO-Long & 50.8 & \textbf{22.2} (-56\%) & 70.0 & \textbf{72.4} (+2.4) \\
\midrule
\textit{LIBERO Avg.} & \textit{48.6} & \textit{\textbf{21.2}} (2.29$\times$) & \textit{72.4} & \textit{\textbf{73.8}} (+1.4) \\
\bottomrule
\end{tabular}
\end{table}

SmolVLA's smaller architecture results in lower baseline latency (48.6ms vs. Pi0.5's 135.2ms), but Reflex still achieves a consistent 2.29$\times$ speedup across all subtasks. Similar to the larger models, the most significant success rate improvement appears on LIBERO-Long (+2.4\%), where reduced latency provides compounding benefits over multi-step sequences. These results confirm that our streaming approach generalizes to different VLA architectures and model scales.

\section{Kinetix Efficiency Analysis}
\label{app:kinetix_efficiency}

Figure~\ref{fig:kinetix_efficiency} provides a detailed efficiency comparison specifically for the Kinetix benchmark. Due to the dynamic nature of Kinetix tasks, low-latency control is particularly critical. Reflex achieves consistent speedups of \textbf{2.41$\times$} for Pi0.5 and \textbf{2.65$\times$} for Pi0, directly enabling the responsiveness required for these physics-rich interactions. Additionally, Reflex reduces peak memory usage by approximately 24\%, demonstrating efficiency gains that persist even in highly dynamic settings.

\begin{figure}[h]
\centering
\includegraphics[width=0.7\textwidth]{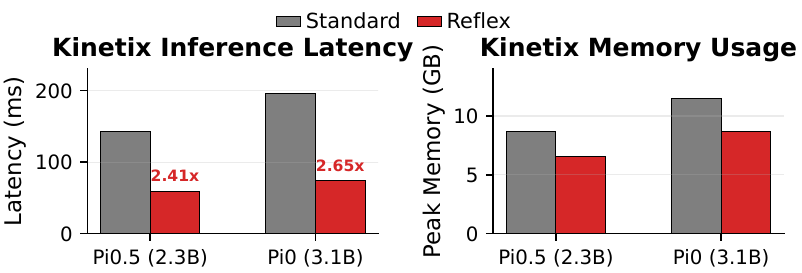}
\caption{Efficiency comparison on Kinetix benchmark. Reflex demonstrates robust acceleration and memory savings on dynamic tasks.}
\label{fig:kinetix_efficiency}
\end{figure}

\end{document}